\documentclass[pdflatex,sn-nature]{sn-jnl}

\usepackage{amsmath}
\usepackage{bm}
\usepackage[title]{appendix}%
\usepackage{xcolor}%
\usepackage{manyfoot}%
\usepackage{listings}%
\usepackage{algorithm}
\usepackage{algorithmicx}
\usepackage{algpseudocode}
\usepackage{comment}
\usepackage{picinpar}
\usepackage{flushend}
\usepackage{colortbl}
\usepackage{soul}
\usepackage{multirow}
\usepackage{pifont}
\usepackage{alltt}
\usepackage{url}
\usepackage{pbox}
\usepackage{footmisc} 
\usepackage{threeparttable}
\usepackage{siunitx}
\usepackage{tabularx}
\usepackage{multirow}
\usepackage{booktabs}
\usepackage{colortbl}
\definecolor{tabcolor}{rgb}{.6,.32,.8}
\usepackage{newtxtext,newtxmath}
\usepackage{enumerate}
\usepackage{natbib}
\usepackage{bibentry}
\usepackage{caption}
\usepackage{siunitx}
\usepackage{pifont}

\newtheorem{remark}{Remark}
\newtheorem{assumption}{Assumption}
\newtheorem{lemma}{Lemma}
\newtheorem{proposition}{Proposition}

\newtheorem{theorem}{Theorem}

\makeatletter
\renewcommand{\fnum@figure}{\textbf{Fig. \thefigure}}
\renewcommand{\fnum@table}{\textbf{Table \thetable}}
\makeatother

\newcommand{\upcite}[1]{\textsuperscript{\citep{#1}}}
\usepackage[numbers]{natbib}
\bibpunct{}{}{,}{n}{}{}  

\unnumbered

\begin{document}
	
	\title[Article Title]{
		Reinforcement learning in linear embedding space unlocks generalizable control across soft robot configurations
	}
	
	
	\author*[1]{\fnm{Xinglong} \sur{Zhang}}
	\equalcont{These authors contributed equally to this work.}
	\author[1]{\fnm{Cong} \sur{Li}}
	\equalcont{These authors contributed equally to this work.}
	\author[2]{\fnm{Hangjie} \sur{Mo}}
	\equalcont{These authors contributed equally to this work.}
	\author[1]{\fnm{Yue} \sur{Jiang}}
	\author[1]{\fnm{Wenyu} \sur{Cao}}
	\author*[1]{\fnm{Xin} \sur{Xu}}
	\author[1]{\fnm{Wei} \sur{Jiang}}
	\author[3,4]{\fnm{Zhenshan} \sur{Bing}}
	\author[1]{\fnm{Yihe} \sur{Yang}}
	\author[2]{\fnm{Xiaojian} \sur{Li}}
	\author[5]{\fnm{Yueneng} \sur{Yang}}
	\author[1]{\fnm{Huimin} \sur{Lu}}
	\author[1]{\fnm{Ling-li} \sur{Zeng}}
	\author[4]{\fnm{Alois} \sur{Knoll}}
	\author*[1]{\fnm{Dewen} \sur{Hu}}
	\author[6]{\fnm{Li} \sur{Wen}}
	\author[7]{\fnm{Wei} \sur{Pan}}
	
	\affil*[1]{\orgdiv{College of Intelligence Science and Technology}, \orgname{National University of Defense Technology}, \orgaddress{\street{Deya Road}, 
			\city{Changsha}, \postcode{410003}, \state{Hunan}, \country{China}}}
	
	\affil[2]{\orgdiv{ School of Management}, \orgname{Hefei University of Technology}, \orgaddress{\street{Nanyihuan Road}, \city{Hefei}, \postcode{230002}, \state{Anhui}, \country{China}}}
	\affil[3]{\orgdiv{State Key Laboratory for Novel Software Technology and the School of Science and Technology}, \orgname{Nanjing University (Suzhou Campus)}, \orgaddress{\street{Taihu Road}, \city{Suzhou}, \postcode{215163}, \country{China}}}
	\affil[4]{\orgdiv{School of Computation, Information and Technology}, \orgname{Technical University of Munich}, \orgaddress{\street{Boltzmann Strasse}, \city{Munich}, \postcode{85748}, \country{Germany}}}
	
	\affil[5]{\orgdiv{College of Aerospace Science and Engineering}, \orgname{National University of Defense Technology}, \orgaddress{\street{Deya Road}, 
			\city{Changsha}, \postcode{410003}, \state{Hunan}, \country{China}}}
	
	\affil[6]{\orgdiv{School of Mechanical Engineering and Automation}, \orgname{Beihang University}, \orgaddress{\street{Changping}, 
			\city{Beijing}, \postcode{100191},  \country{China}}}
	\affil[7]{\orgdiv{School of Engineering}, \orgname{Newcastle University}, \orgaddress{\street{Claremont Road}, \city{Newcastle-upon-Tyne},\\  \postcode{NE1 7RU}, \country{UK}}} 

\abstract{
Soft-bodied organisms such as octopuses and elephant trunks exhibit remarkable morphological adaptability, dynamically reconfiguring body shape and stiffness, and flexibly adjusting their control strategies to enable versatile behaviors\upcite{li2021self, wang2024spirobs}. Inspired by these biological systems, various soft robots have emerged in recent decades, featuring diverse materials, stiffnesses, and morphologies tailored to specific tasks.
Despite substantial advances in the materials and structural designs of soft robots\upcite{xie2023octopus, rus2015design,xu2025transforming}, developing a generalizable control framework capable of rapid adaptation across diverse configurations remains a long-standing challenge. Existing controllers are limited to fixed configurations, demanding laborious configuration-specific remodelling and policy redesign for new configurations\upcite{pique2022controlling,haggerty2023control,mamakoukas2021derivative}.
Here, we introduce a generalizable control system that enables rapid adaptation across diverse soft robot configurations via reinforcement learning in a shared linear Koopman embedding space. By encoding robot dynamics into this embedding space, our method decouples control policies from specific morphologies, allowing real-time, model-free policy adaptation across diverse configurations without retraining from scratch.
We validate our system across 30 soft robot configurations with different materials, stiffnesses, structural geometries, and assembly patterns, achieving substantial performance gains, including a 75× reduction in training samples compared to state-of-the-art methods, while sustaining robust performance under high-speed (\SI{1.89}{\, \meter \cdot \second^{-1}}), high-payload (\SI{1}{\, \kilogram}) and multiactuator faults. Our system enables a single base policy, with online updates, to achieve a wide range of real-world skills previously unattainable in soft robotics, such as forceful nail hammering, delicate drinks serving, fluid calligraphy brush handwriting, and quick hand-eye reaction games. This work establishes a unified and adaptable control paradigm for diverse soft robot configurations, bridging mechanical reconfigurability with control flexibility, and may offer broader insights for generalizable control in complex physical systems.
}

\maketitle
 Natural soft-bodied organisms demonstrate remarkable morphological adaptability through dynamic reconfiguration
 and adaptive control\upcite{li2021self,wang2024spirobs,pan2025miniature,yue2025embodying}. For instance, octopuses reshape their body structures and elephant trunks modulate stiffness to enable versatile functionality that ranges from agile locomotion to flexible manipulation. This adaptability stems from embedded sensorimotor loops that continuously adjust to structural changes, allowing seamless multitask execution.
 Inspired by biological systems, soft robots leverage compliant materials and structures to achieve inherent adaptability and safe interaction\upcite{manti2015bioinspired,ren2024design,xie2023octopus,kim2013soft,shah2021soft}. 
 These features enable applications in delicate manipulation\upcite{manti2015bioinspired}, human-robot cooperation\upcite{rus2015design}, and confined-space navigation\upcite{mo2024data}. 
 Despite advances in material design and structural reconfiguration of soft robots to meet diverse task demands\upcite{freeman2023topology,li2022scaling,xu2025transforming}, developing a generalizable control framework capable of fast adaptation across diverse configurations remains a long-standing problem\upcite{freeman2023topology,milana2025physical}. Existing controllers are typically configuration-specific, requiring extensive remodelling or controller redesign for each configuration\upcite{pique2022controlling,haggerty2023control,mamakoukas2021derivative,Liu11011930}, limiting the scalable deployment of soft robots. 

One challenge for multi-configuration adaptation is the domain expertise and intensive labor required for dynamical modelling of soft robots\upcite{armanini2023soft}. 
Analytical modelling methods, such as the Cosserat rod theory, achieve high modelling accuracy under quasi-static conditions for specific configurations but come with prohibitive computational costs\upcite{renda2018discrete,till2019real}.  
Data-driven modelling approaches, such as neural network and Gaussian process regression, reduce the need for prior dynamic knowledge, but require extensive data collection and training\upcite{huang2024high,braganza2007neural,giorelli2015neural,kim2021probabilistic}.
The Koopman operator\upcite{korda2018linear,mamakoukas2023learning} offers a promising alternative by capturing nonlinear dynamics through linear evolutions in an infinite-dimensional embedding space. 
Finite-dimensional approximations of Koopman models are commonly integrated with the linear quadratic regulator (LQR)\upcite{haggerty2023control} and model predictive control (MPC)\upcite{zhang2021}, enabling efficient controller synthesis.
Among them, Koopman-based MPC has demonstrated its effectiveness in the control of soft robots, but mainly in quasi-static space\upcite{Bruder9477047,bruder2020data,pan2023auto,wang2022improved}.  A recent configuration-specific Koopman-based control approach advanced a soft robotic arm operating in an inertial dynamic regime through time-delay embeddings\upcite{haggerty2023control}. Despite these advances, the aforementioned approaches require repetitive modelling and controller design to adapt to new configurations, limiting their practical applicability in reconfigurable robotic systems.

The second challenge is the rapid generation of policies to accommodate dynamic variations in multiple configurations. 
Although reinforcement learning (RL) offers a promising model-free policy design approach\upcite{kober2013reinforcement,meng2025preserving,medany2025model,han2024lifelike}, prevalent RL algorithms face high sample complexity, particularly in continuous control tasks\upcite{naughton2021elastica,bing2024context}.
This high sample demand is costly for real-world training, due to the delicate nature and susceptibility to wear and tear of soft robots\upcite{thuruthel2018model}.
Although physical simulators provide an alternative to real-world training, RL algorithms such as deep deterministic policy gradient\upcite{satheeshbabu2020continuous} and proximal policy optimization\upcite{nazeer2024rl} still require millions of data to converge\upcite{naughton2021elastica}.
Moreover, building high-fidelity simulators of soft robots is challenging due to their nonlinear, configuration-dependent dynamics, exacerbating the sim-to-real gap. 
 \begin{figure*}[!t]
    \centering
    \includegraphics[width=1 \textwidth]{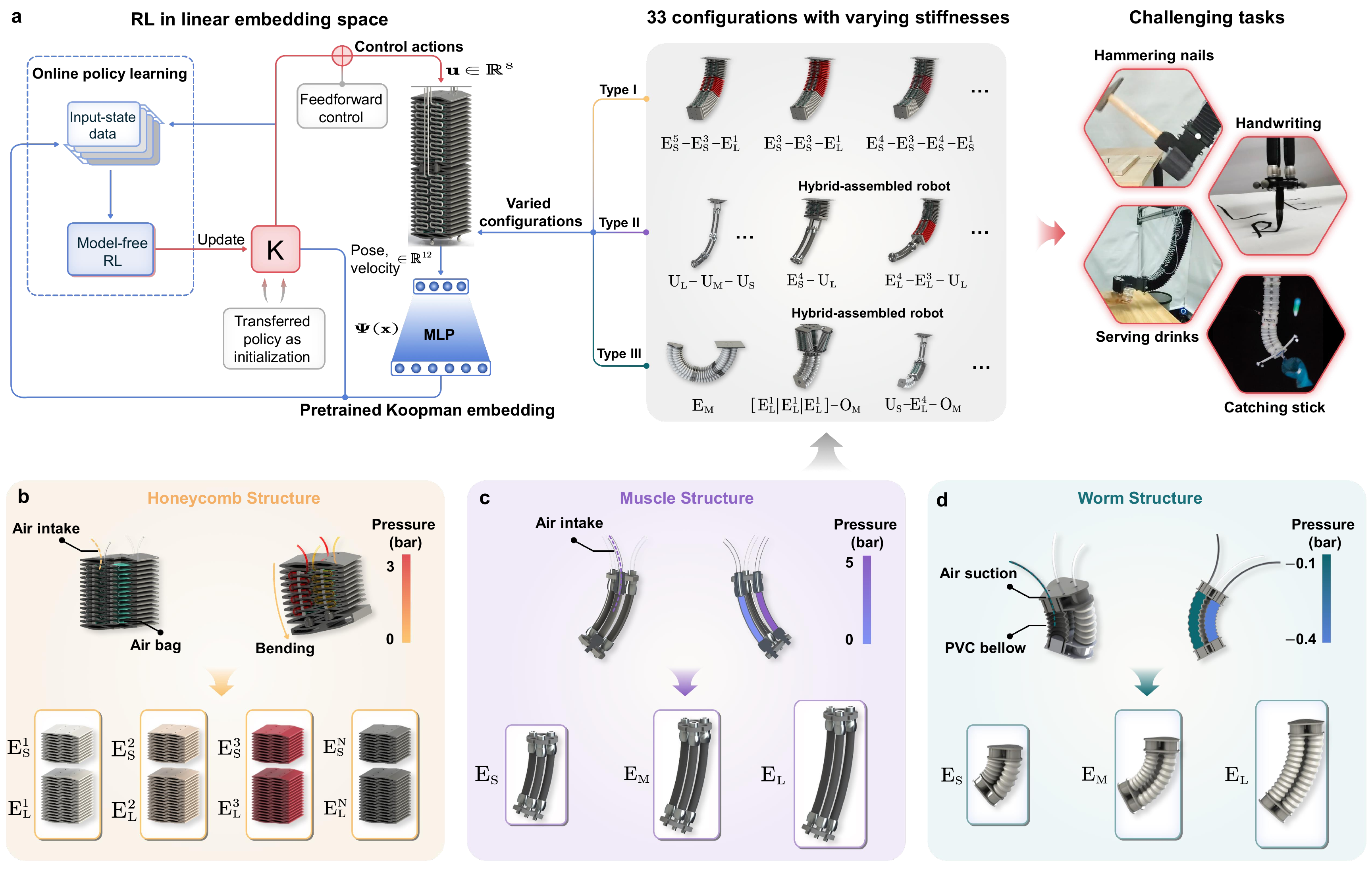}
    \caption{
   \textbf{Our generalizable control system across 30 configurations with diverse materials and stiffnesses. }
   \textbf{a}, 
   A generalizable control system with fast adaptability across multiple configurations and tasks. 
   The Koopman embedding $\Psi(x)$ is offline trained only once on a fixed configuration, and the policy gain matrix $K$ is online updated for multi-configuration adaptation via model-free RL in the Koopman embedding space.
   \textbf{b}, Type I elephant-trunk robots:
  Each segment features a honeycomb structure with four pneumatic actuators. 
   Honeycomb segments' properties vary with wall thicknesses (superscript indices $1\cdots N$ denote wall thickness affecting the stiffness) and lengths (subscript indices ‘S, M, L’ represent segment lengths: Short, Medium, and Long). 
      \textbf{c}, Type II  soft-muscle robots:
   Each segment is composed of three parallel pneumatic muscles, each driven by a pneumatic actuator.
   \textbf{d}, Type III  worm robots:
   Each segment features a worm structure with three pneumatic actuators. 
   }
    \label{fig natural}
\end{figure*}

In this work, we introduce a generalizable control system that enables fast adaptation across diverse configurations in soft robots (Fig.~\ref{fig natural}). Our system consists of two key modules:  (a) a linear embedding RL (LERL) framework that learns control policies within a linear Koopman embedding space for fast multi-configuration adaptation (Fig.~\ref{fig natural}a), and (b) three types of soft robot platforms with diverse configurations for experimental validation, including an elephant-trunk-shaped robot, a soft-muscle robot, and a worm-like robot (Fig.~\ref{fig natural}b-d). These robot configurations feature diverse materials, stiffnesses, structural geometries, and assembly patterns.
The presented LERL framework is built upon the hypothesis that different soft robot configurations share fundamental but also exhibit configuration-dependent dynamic characteristics. We demonstrate that by mapping these shared dynamics into a unified linear embedding space and performing policy adaptation online within this space, control policies could be quickly transferred across diverse configurations.

We validated our system experimentally across 30 robot configurations, including 22 variations of elephant-trunk robots, three soft-muscle robot variants, three worm-like robots, and two hybrid-assembled robots combining honeycomb and soft-muscle segments (Fig.~\ref{fig natural}a and Fig.~\ref{fig all config}).
The experimental results demonstrate that LERL achieves a 75× reduction in training samples while maintaining robust performance in challenging conditions, including high-payload manipulation (\SI{1}{\, \kilogram}), high-speed motion (Worm: $\SI{1.89}{\, \meter \cdot \second^{-1}}$; Elephant-trunk: $\SI{0.78}{\, \meter \cdot \second^{-1}}$), and multi-actuator fault scenarios (Table~\ref{tab related works}). 
Critically, LERL enables rapid policy transfer to execute diverse tasks (Fig.~\ref{fig natural}d), allowing a base policy with online updates to empower robots performing previously unattainable real-world skills, including carpenter-inspired nail hammering into wooden boards, bartender-like drinks serving with safe human-robot interaction, fluid calligraphy brush handwriting, and quick hand-eye reaction games mimicking athletic and child training.
The presented work represents a step toward a generalizable control framework that enables rapid multi-configuration adaptation in soft robots. 
Through online data-efficient RL within a pretrained Koopman embedding space, our proposed system decouples control policies from specific configurations, bridging mechanical reconfigurability and control flexibility. 
This work may also inspire policy transfer and adaptation solutions for other physical systems with shared fundamental dynamic characteristics.

			\begin{figure}[t!]
				\centering				\includegraphics[width=0.95\textwidth]{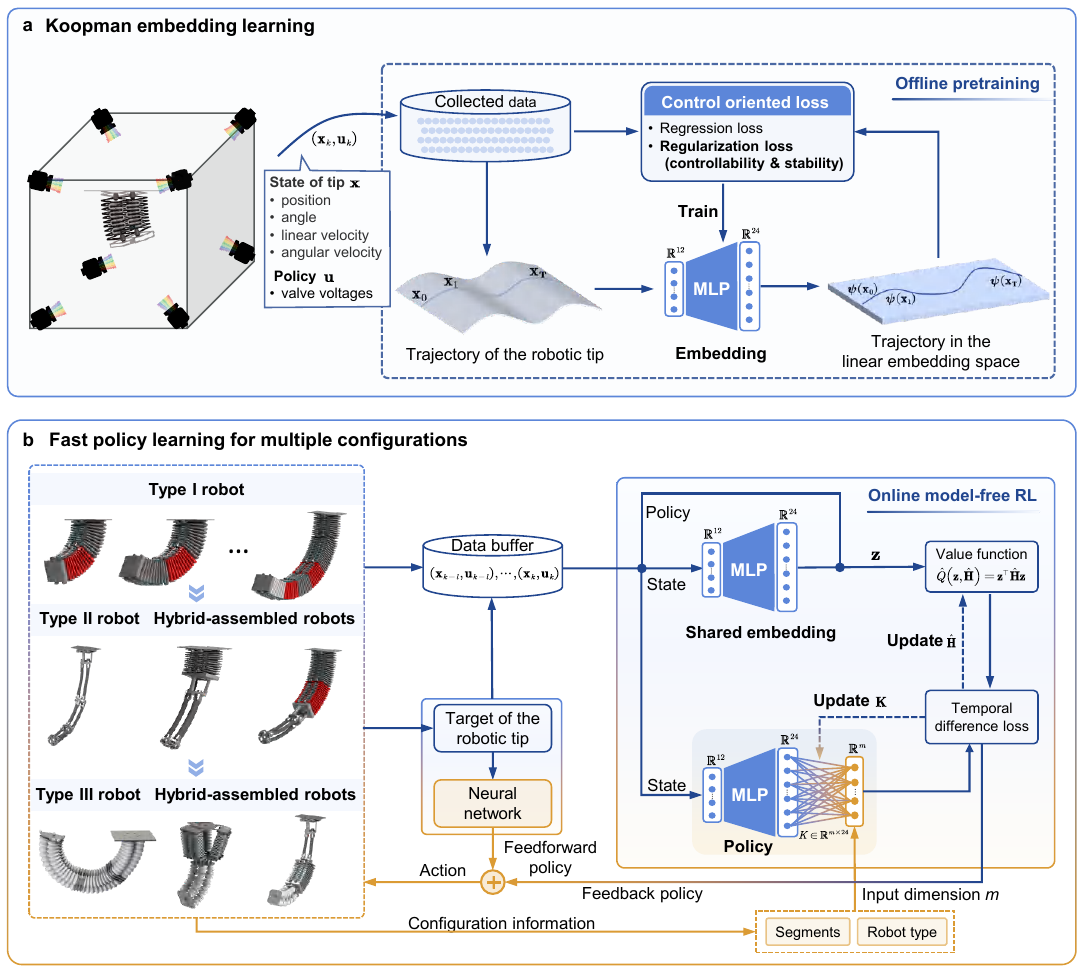}
				\caption{ \textbf{Linear embedding RL framework.}
    \textbf{a}, The Koopman embedding is trained offline with precollected input-state samples from the single honeycomb segment $\text{E}^{1}_\text{S}$.
    \textbf{b}, Online policy learning is conducted in the linear embedding space using model-free RL across multiple configurations with diverse materials, stiffnesses, structural geometries, and assembly pattern, where the pretrained Koopman embedding is shared for all new configurations.
    }
				\label{fig:control-structure}
			\end{figure}

\begin{table*}[h!tb]
	\captionsetup{justification=justified,singlelinecheck=false}
	\caption{Comparison with state-of-the-art (SOTA) methods for control of soft robots.
		Our LERL framework enables policy transfer across 30 distinct configurations, whereas most SOTA methods are limited to a fixed configuration. With just 2k samples for policy transfer, LERL uses substantially fewer resources than RL methods such as proximal policy optimization (PPO)\upcite{nazeer2024rl} ($\geq$5250k, 2600× more), trust region policy optimization (TRPO)\upcite{centurelli2022closed} ($\geq$300k, 150× more), and deep deterministic policy gradient (DDPG)\upcite{satheeshbabu2020continuous} ($\geq$100k, 50× more). Additionally, KCID with time-delay (TD) embeddings\upcite{haggerty2023control} requires 18k samples per configuration, which is 9× more than LERL with MLP embeddings. When both using the same MLP embeddings, KCID demands 75× more samples than LERL. Despite using the fewest samples, LERL outperforms all other SOTA methods in terms of speed, acceleration, and payload capacity. LSTM: Long short-term memory;  
		RNN: Recurrent neural network. }
	\label{tab related works}
	\vskip 0.2cm
	\begin{threeparttable}
			\renewcommand{\arraystretch}{1.2}
			\scalebox{0.7}{
			\begin{tabular}{lcccccccccccc}
				\toprule
				\multirow{2}{*}{\textbf{Work}}
				&\multirow{2}{*}{\textbf{Method}} 
				&\multirow{2}{*}{\textbf{Config.}} 
				&\multirow{1}{*}{\textbf{Config.}} 
				&\multirow{1}{*}{\textbf{Robot}} 
				&\multirow{1}{*}{\textbf{Hybrid}} 
				&\multirow{1}{*}{\textbf{Online}} 
				&\multirow{2}{*}{\textbf{Sample}}
				&\multirow{2}{*}{\textbf{Speed} $\left(\text{m}\cdot \text{s}^{-1}\right)$}
				&\multirow{2}{*}{\textbf{Accel} $\left(\text{m}\cdot \text{s}^{-2}\right)$}
				&\multirow{2}{*}{\textbf{Max Load} (g)}
				\\  
				&
				&
				&\multirow{1}{*}{\textbf{transfer}} 
				&\multirow{1}{*}{\textbf{type}}
				&\multirow{1}{*}{\textbf{assembly}}
				& \multirow{1}{*}{\textbf{learning}}
				& 
				\\
				\midrule
				\multirow{2}{*}{Ours} 
				&\multirow{2}{*}{LERL}
				& \multirow{2}{*}{30}
				& \multirow{2}{*}{29}
				&\multirow{2}{*}{3}
				&\multirow{2}{*}{\ding{51}}
				&\multirow{2}{*}{\ding{51}}
				&\multirow{2}{*}{2k} 
				&\multirow{1}{*}{Type I: 0.78} 
				&\multirow{1}{*}{11.9} 
				&\multirow{1}{*}{1000}  \\  \cmidrule{9-11}  
				& 
				& 
				&
				&
				& 
				&
				&\multirow{1}{*}{(transfer)} 
				&\multirow{1}{*}{Type III: 1.89} 
				&\multirow{1}{*}{22.34} 
				&\multirow{1}{*}{200}
				\\ 
				\midrule
				\multirow{1}{*}{Ref\upcite{centurelli2022closed}} 
				&\multirow{1}{*}{LSTM + TRPO}
				& 1
				& -
				&\multirow{1}{*}{1}
				&\multirow{1}{*}{\ding{55}}
				&\multirow{1}{*}{\ding{55}}
				& $\geq$300k
				&0.17
				&-
				& 165
				\\ 
				\midrule
				\multirow{1}{*}{Ref\upcite{thuruthel2018model}} 
				&\multirow{1}{*}{RNN + RL} 
				& 1
				& -
				&\multirow{1}{*}{1}
				&\multirow{1}{*}{\ding{55}}
				&\multirow{1}{*}{\ding{55}}
				& 28k
				& -
				& - 
				& 105
				\\ 
				\midrule
				\multirow{1}{*}{Ref\upcite{nazeer2024rl}} 
				&\multirow{1}{*}{LSTM+PPO} 
				& 1
				& -
				&\multirow{1}{*}{1}
				&\multirow{1}{*}{\ding{55}}
				&\multirow{1}{*}{\ding{51}}
				& $\geq$ 5250k 
				&0.3
				&-
				& -
				\\ 
				\midrule
				\multirow{1}{*}{Ref\upcite{satheeshbabu2020continuous}} 
				&\multirow{1}{*}{DDPG} 
				& 1
				& -
				&\multirow{1}{*}{1}
				&\multirow{1}{*}{\ding{55}}
				&\multirow{1}{*}{\ding{55}}
				& $\geq$ 100k 
				& -
				& -
				& 20
				\\ 
				\midrule
				\multirow{1}{*}{Ref\upcite{jiang2021hierarchical}} 
				&\multirow{1}{*}{$\mathcal{Q}$-learning} 
				& 1
				& -
				&\multirow{1}{*}{1}
				&\multirow{1}{*}{\ding{55}}
				&\multirow{1}{*}{\ding{55}}
				& 20k 
				& -
				& -
				&500
				\\ 
				\midrule
				\multirow{2}{*}{Ref\upcite{haggerty2023control}} 
				&\multirow{2}{*}{KCID} 
				& \multirow{2}{*}{2}
				& \multirow{2}{*}{-}
				&\multirow{2}{*}{1}
				&\multirow{2}{*}{\ding{55}}
				&\multirow{2}{*}{\ding{55}}
				&\multirow{1}{*}{TD: 18k} 
				&\multirow{2}{*}{1.52}
				&\multirow{2}{*}{11.6}
				&\multirow{2}{*}{-}
				\\ \cmidrule{8-8}
				&&&&&&&\multirow{1}{*}{MLP: 150k} &&&\\
				\midrule
				\multirow{1}{*}{Ref\upcite{wang2022improved}} 
				&\multirow{1}{*}{Koopman + MPC} 
				& 1
				& -
				&\multirow{1}{*}{1}
				&\multirow{1}{*}{\ding{55}}
				&\multirow{1}{*}{\ding{55}}
				& 247k 
				&-
				&-
				& -
				\\ 
				\midrule
				\multirow{1}{*}{Ref\upcite{bruder2020data}} 
				&\multirow{1}{*}{Koopman  + MPC} 
				& 1
				& -
				&\multirow{1}{*}{1}
				&\multirow{1}{*}{\ding{55}}
				&\multirow{1}{*}{\ding{55}}
				& $\geq$43k
				& -
				&-
				& -
				\\ 
				\midrule
				\multirow{1}{*}{Ref\upcite{Bruder9477047}} 
				&\multirow{1}{*}{Koopman + MPC} 
				& 1
				& -
				&\multirow{1}{*}{1}
				&\multirow{1}{*}{\ding{55}}
				&\multirow{1}{*}{\ding{55}}
				& $\geq$325k
				& -
				& -
				& 275
				\\ 
				\midrule
				\multirow{1}{*}{Ref\upcite{huang2024high}} 
				&\multirow{1}{*}{Hybrid model + PID} 
				&1
				& -
				&\multirow{1}{*}{1}
				&\multirow{1}{*}{\ding{55}}
				&\multirow{1}{*}{\ding{55}}
				& 120k
				&0.08
				&0.07
				& -
				\\ 
				\bottomrule
			\end{tabular}
		}	 
	\end{threeparttable}
\end{table*}

\section*{Linear embedding RL framework}
LERL formulates control policy learning in a pretrained linear embedding space, allowing computationally and data efficient online policy learning across diverse configurations (Fig.~\ref{fig:control-structure}).
The initial step involves pretraining a multilayer perceptron (MLP)-based Koopman embedding $\Psi(x)$ using input-state data collected from a single honeycomb segment ($\text{E}^{1}_\text{S}$), where $x\in\mathbb{R}^{12}$ represents the pose (position and orientation) and velocity (linear and angular velocities) of the robot tip (Methods and Supplementary Section 1). The Koopman embedding maps the evolution of the robot tip's pose and velocity to a higher-dimensional, inherently linear space (Fig.~\ref{fig:control-structure}a).
%
In Koopman embedding learning, we introduce control-oriented regularization techniques along with regression objectives to ensure key open-loop control properties (stability and controllability) in the embedding space,  which are crucial for closed-loop performance. 

With pretrained Koopman embedding, model-free RL is employed to update policies online in the linear embedding space (Fig.~\ref{fig:control-structure}b). The Koopman embedding and control policy are then transferred across configurations, facilitating real-time policy adaptation for diverse configurations in the unified linear embedding space.
To address different adaptation scenarios, we introduce two variants of our method: LERL and LERL-transfer. LERL trains the policy from scratch using a pretrained configuration-specific Koopman embedding. 
LERL-transfer extends LERL by incorporating policy transfer, beginning with a pretrained policy and Koopman embedding from one configuration, and subsequently updating the policy online to adapt to new configurations.
\section*{Results}
In this section, we conducted comprehensive experiments to validate the generalizability and adaptability of our control system. The first experiment was to validate the pretrained Koopman embedding and online policy learning performance on one configuration. In subsequent experiments, the policy was transferred to control 30 distinct configurations, with comprehensive validations across all cases.  Throughout the experimental study, we compared our approach with the Koopman control method with inertial dynamics (KCID)\upcite{haggerty2023control}, a state-of-the-art solution for dynamic control of soft robots with fixed configurations. KCID is limited by its reliance on configuration-specific modelling of soft robots. In contrast, our method is configuration-agnostic, achieving a 75× reduction in training samples and eliminating the bottlenecks of repetitive modelling when adapting to new configurations. Finally, the control policy was transferred across diverse configurations to accomplish real-word challenging tasks previously unattainable for soft robots, including carpenter-inspired nail hammering, bartender-like drinks serving, fluid calligraphy brush handwriting, and quick hand-eye reaction games.

%
%
\begin{figure*}[t!]
    \centering
    \includegraphics[width=0.98 \textwidth]{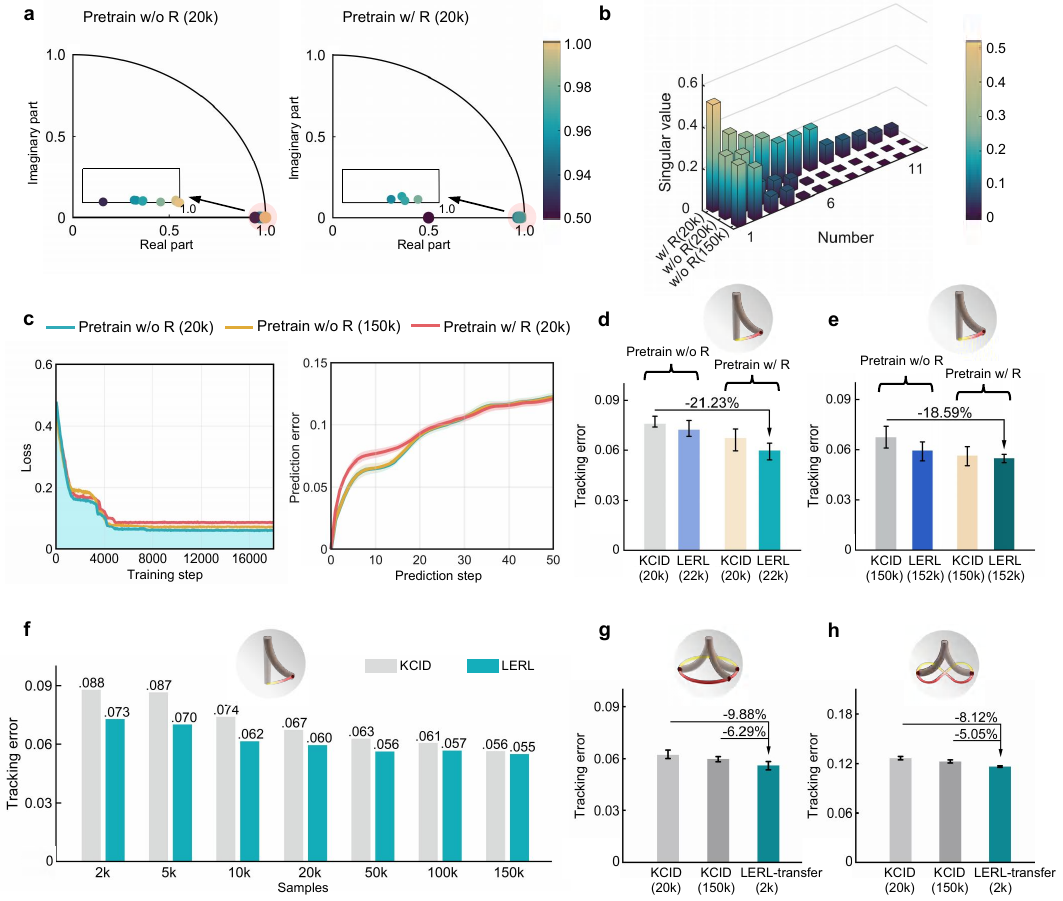}
    \caption{
    \textbf{Offline Koopman embedding training 
    and online policy learning for honeycomb segment $\text{E}^{1}_\text{S}$.}
 \textbf{a}, 
 The open-loop eigenvalue plots for Koopman models pretrained with (w/) or without (w/o) regularization (R).  The eigenvalues pretrained w/ regularization locate farther from the unit circle, resulting in a larger stability margin; 20k=20000.
  \textbf{b}, The singular value distribution  (SVD) of the controllability matrix of Koopman models. Pretraining w/ regularization leads to SVDs with higher magnitudes. 
  \textbf{c}, The training loss (equation~\eqref{eq loss sum} in Methods) and  prediction error of the Koopman model, $\text{Prediction\,error}:= \sqrt{\frac{1}{L} \sum_{l=1}^L \left\| x_{l} - \hat{x}_{l} \right\|^2}$, $L$ denotes the amount of samples. 
    \textbf{d}-\textbf{e}, Performance comparison of KCID and LERL during pretraining w/ or w/o regularization, 
$\text{Tracking error} := \frac{1}{M \cdot N} \sum_{i=1}^{M} \sum_{k=1}^{N}  \left\| \bar{x}_{i,k} - \bar{x}_{r, i,k} \right\|^2$, 
   $\bar{x}$, $\bar{x}_{r} \in \mathbb{R}^{12}$ are the normalized real state variable (pose and velocity) and its reference, $N$ is the algorithm horizon, and $M = 5$ is the number of experiments.
\textbf{f}, Sample comparison of LERL and KCID in target reaching.
 \textbf{g}-\textbf{h}, Performance comparison in policy transfer to circular and bow tie tracking.
    }
    \label{offline-online}
\end{figure*}
  \begin{figure*}[t!]
    \centering
    \includegraphics[width=0.95 \textwidth]
    {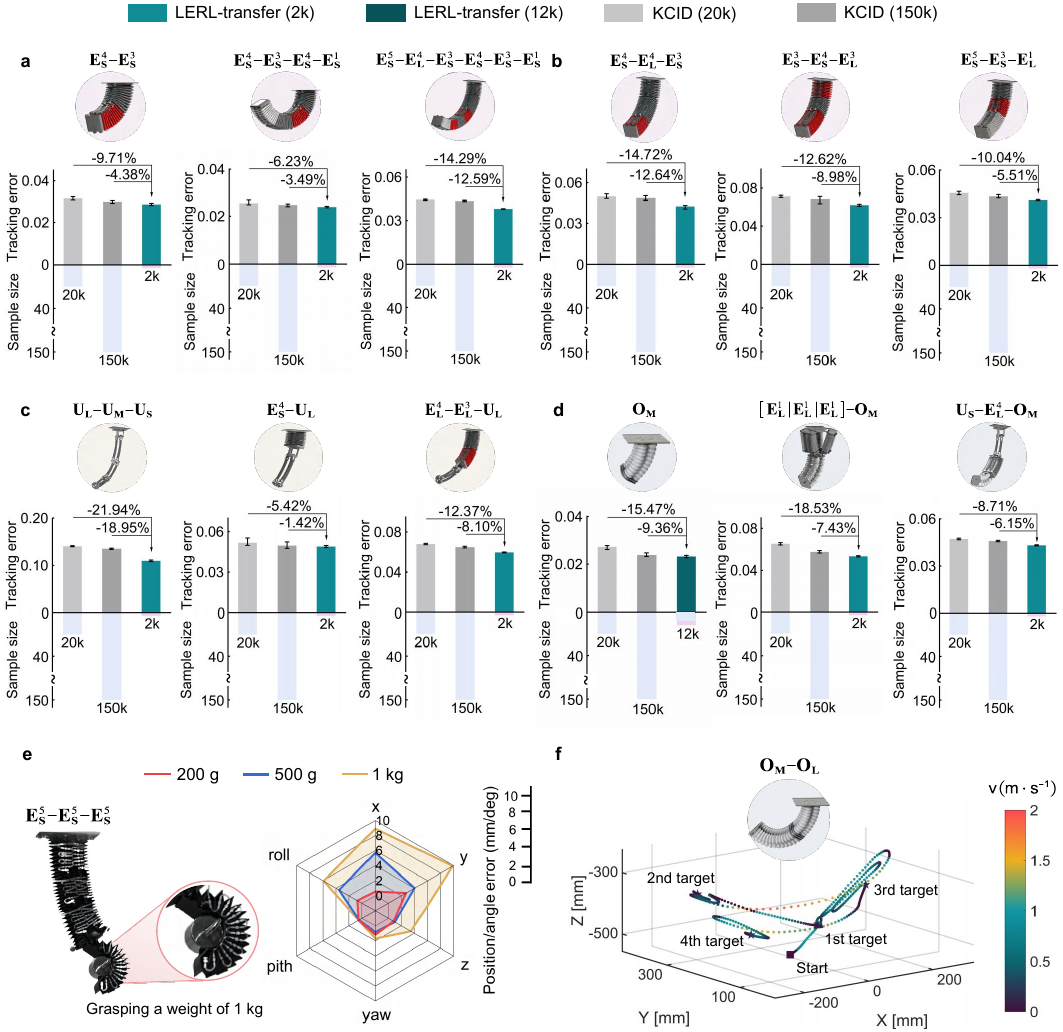}
    \caption{
    \textbf{ 
   Policy adaptation performance to multiple configurations of three soft robot prototypes.
}
\textbf{a}, Policy adaptation to elephant-trunk robots with multiple segments. 
\textbf{b}, Policy adaptation to elephant-trunk robots under diverse assembly configurations.
\textbf{c}, Policy adaptation to soft-muscle and hybrid-assembled robots.
\textbf{d}, Policy adaptation to worm robots.
\textbf{e}, Empirical robustness under different loads (\SI{200}{\, \gram}, \SI{500}{\, \gram}, \SI{1}{\, \kilogram}) and visualization of the robot grasping a \SI{1}{\, \kilogram} weight to the target position.
\textbf{f}, The worm robot in configuration $\text{E}_\text{M}$-$\text{E}_\text{L}$ tracks targets with a maximum velocity up to $\SI{1.89}{\, \meter \cdot \second^{-1}}$.
}
    \label{fig transfer diveser confg}
\end{figure*}
\subsection*{Policy learning performance in a single configuration}
\vspace{-2mm}

An MLP-based Koopman embedding $\Psi(x)$ was first trained in a configuration (honeycomb segment $\text{E}^{1}_\text{S}$) with control-oriented regularization  (Methods and Supplementary Section 1). 
    As demonstrated in Fig.~\ref{offline-online}a, applying control-oriented regularization during the pretraining phase produces an open-loop stable Koopman model, characterized by smaller-magnitude eigenvalues positioned farther from the unit circle compared to unregularized models. In addition, it preserves state prediction errors comparable to the unregularized case (Fig.~\ref{offline-online}c), while enlarging the singular values of the controllability matrix  (Fig.~\ref{offline-online}b). The enhanced open-loop stability and controllability address the gap between data-driven modelling and closed-loop control.

%
%

With the pretrained Koopman embedding, model-free RL is employed to update policies online in the linear embedding space. 
The control policies within the pretrained embedding space were generated using 2000 (2k) samples: 500 samples for offline feedforward policy generation (Fig.~\ref{fig:control-structure}b; Methods and Supplementary Section 2) and 1.5k samples for online feedback policy training (Methods and Supplementary Section 4). 
As demonstrated in Fig.~\ref{offline-online}, LERL using 22k training samples, 20k for offline Koopman embedding training and 2k for policy generation (rightmost bar, Fig.~\ref{offline-online}d), achieves an 11.37\% reduction in tracking error compared to KCID with 150k samples (leftmost bar, Fig.~\ref{offline-online}e); and results in a steady-state position error of 0.32 mm, nearly 4× smaller than KCID's 1.26 mm.
    The comparison under multiple sample conditions in Fig.~\ref{offline-online}f demonstrates a substantial sample reduction of LERL over KCID even when the configuration is unchanged. Specifically, LERL with 20k pretraining samples performs on par with KCID using 100k pretraining samples, reducing training data by 5× in a fixed configuration. For new tasks, LERL-transfer can further refine the transferred policy online with just 2k samples, surpassing KCID in both control performance and sample efficiency (Figs.~\ref{offline-online}g-h).

\subsection*{Policy adaptation performance in multiple configurations}
\vspace{-2mm}
The Koopman embedding $\Psi(x)$, pretrained in the configuration $\text{E}^{1}_\text{S}$ using 20k training samples,  was reused in new configurations without retraining. The learned policy was transferred and updated online to adapt to dynamic variations in multiple configurations. 
Fig.~\ref{fig transfer diveser confg} shows the results of LERL-transfer and KCID for circular tracking in 12 distinct configurations, while auxiliary results for target reaching and bow tie tracking in 18 configurations are given in Figs.~\ref{fig appendix 1seg transfer 234segs sp} and \ref{fig appendix 1seg transfer 234segs bowtie}, and Supplementary Video 5. 
\ \\
\textbf{Policy adaptation from one segment to more.} The control policy initially trained in $\text{E}^{1}_\text{S}$ was efficiently transferred using a padding procedure detailed in Supplementary Section 5, and refined online to adapt to new configurations comprising two, three, and four segments. As shown in Fig.~\ref{fig transfer diveser confg}a, LERL-transfer with 2k samples achieves lower tracking errors in all scenarios compared to KCID with 150k samples, reducing the sample size by 75×.
\ \\
\textbf{Policy adaptation under diverse assembly configurations.} 
We demonstrate the rapid policy adaptability to the three-segment elephant-trunk robot with various assembly configurations in Fig.~\ref{fig transfer diveser confg}b, wherein LERL-transfer costing 2k samples outperforms KCID with 20k and 150k samples. Compared to KCID with 150k samples, our method reduces training samples by 75×, while achieving a reduction of 5.51\%-12.64\% in the tracking error. We also evaluate the empirical robustness of LERL in the configuration $\text{E}^{5}_\text{S}$-$\text{E}^{5}_\text{S}$-$\text{E}^{5}_\text{S}$  under different loads (\SI{200}{\, \gram}, \SI{500}{\, \gram}, \SI{1}{\, \kilogram}), as shown in Fig.~\ref{fig transfer diveser confg}e and Supplementary Video 6. The position and angle of the robot tip eventually converge to the target with a steady-state position error of 0.48 mm under a payload of 200 g and with a steady-state position error of 10.07 mm under a payload of \SI{1}{\, \kilogram} (Fig.~\ref{fig different embedding comp}c).
\ \\
\textbf{Policy adaptation to soft-muscle and hybrid-assembled robots.} We further validate LERL’s versatility by transferring the control policy to heterogeneous soft-muscle robots and hybrid-assembled robots (Fig.~\ref{fig transfer diveser confg}c). Although the actuator numbers of soft-muscle segments differ from those of honeycomb segments, the learned policies are directly transferable via an action re-allocation strategy (Supplementary Section 5). The hybrid-assembled robots, composed of honeycomb and soft-muscle segments, exhibit substantial differences in materials, structures, and actuations (Fig.~\ref{fig natural}b and c). Nevertheless, LERL-transfer with only 2k samples enables hybrid-assembled robots to successfully complete tracking tasks, achieving an 8.10\% reduction in tracking error compared to KCID with 150k samples (Fig.~\ref{fig transfer diveser confg}c). These results highlight LERL’s potential for rapid policy adaptation across heterogeneous robots with distinct dynamic characteristics.
\ \\
\textbf{Policy adaptation to worm robots.}
During policy adaptation to the worm segment $\text{E}_\text{M}$, the Koopman embedding pretrained on the honeycomb segment is reused. An offline policy initialization procedure (see Supplementary Section 7) was conducted using 10k samples, as direct policy transfer from the elephant-trunk robot is infeasible due to different actuation modes (positive vs. negative pressure; Fig.~\ref{fig natural}b and d). 
As shown in~Fig.~\ref{fig transfer diveser confg}d, LERL-transfer with 12k samples (10k were used for offline policy initialization) outperforms KCID with 20k and 150k samples. 
In policy adaptation to worm robots with two and three segments, the online learned policy using 2k samples outperforms KCID with 150k samples, reducing the sample size by 75× (Fig.~\ref{fig transfer diveser confg}d). 
Furthermore, in the worm robot with configuration $\text{E}_\text{M}$-$\text{E}_\text{L}$, LERL achieves position tracking with a steady-state position error of 3.22 mm while operating at the extreme condition (Fig.~\ref{fig transfer diveser confg}f) with velocities and accelerations up to $\SI{1.89}{\, \meter \cdot \second^{-1}}$ and $\SI{22.34}{\, \meter \cdot \second^{-2}}$ (Table~\ref{tab related works}), which are 24\% and 92\% higher than the previously recorded\upcite{haggerty2023control} velocity of $\SI{1.52}{\, \meter \cdot \second^{-1}}$ and acceleration of $\SI{11.6}{\, \meter \cdot \second^{-2}}$.
  \begin{figure*}[!t]
     \centering
      \includegraphics[width=0.9\textwidth]{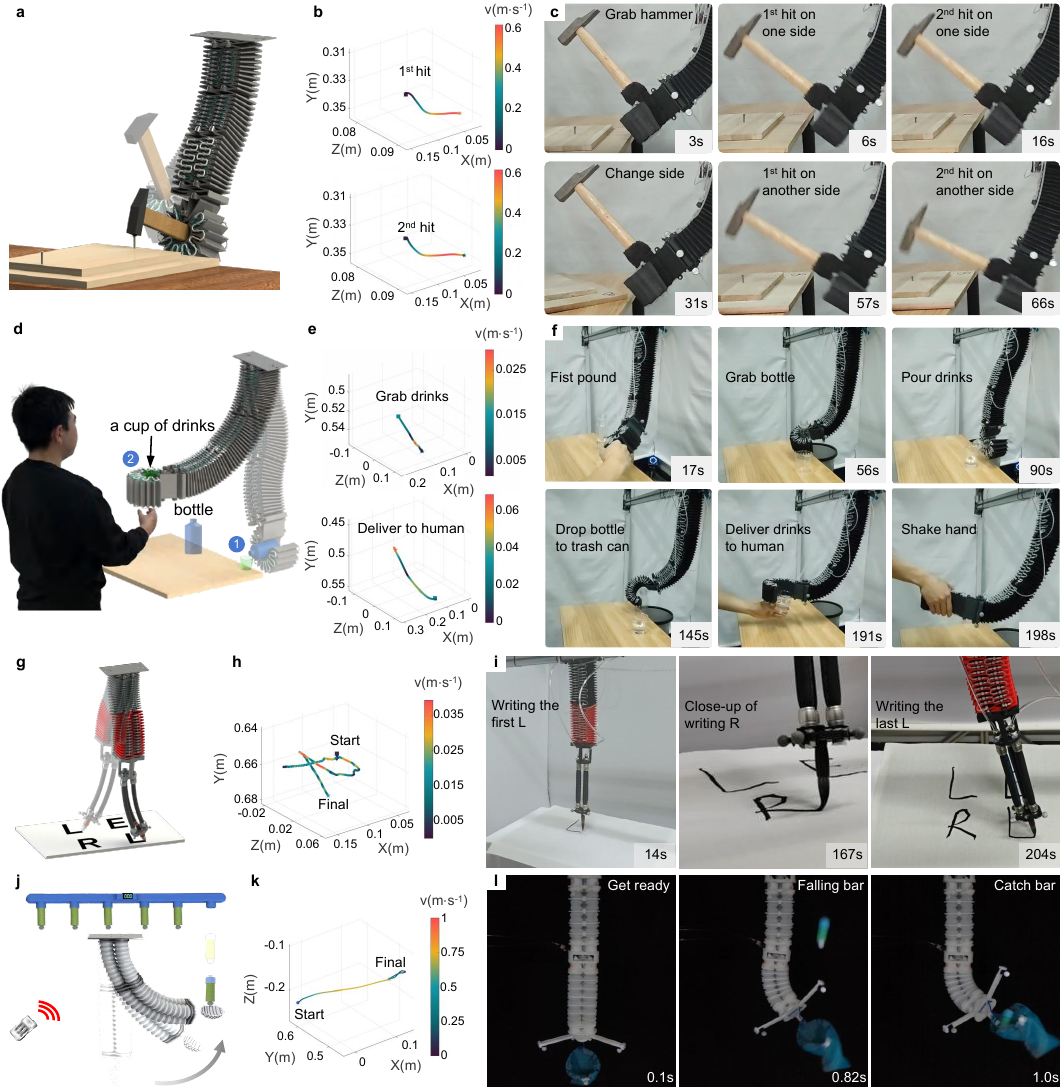}
     \caption{
     \textbf{
     Completing challenging tasks with LERL-empowered soft robots. 
     } 
     \textbf{a}-\textbf{c}, The elephant-trunk soft robot drives nails to bind two wooden boards: scene \textbf{a}, highlighted trajectory \textbf{b}, and snapshots \textbf{c}. 
      \textbf{d}-\textbf{f}, The elephant-trunk robot autonomously serves drinks to a human with safe human-robot interactions: scene \textbf{d}, highlighted trajectory \textbf{e}, and snapshots \textbf{f}. 
    \textbf{g}-\textbf{i}, The hybrid-assembled robot performs calligraphy brush handwriting: scene \textbf{g}, highlighted trajectory \textbf{h}, and snapshots \textbf{i}. 
     \textbf{j}-\textbf{l}, The worm robot mimics human to play a quick hand-eye reaction game: scene \textbf{j}, highlighted trajectory \textbf{k}, and snapshots \textbf{l}. 
     }
     \label{fig complex task}
 \end{figure*} 

\subsection*{Hammering nails}
\vspace{-2mm}
No previous work on soft robots has successfully utilized a hammer to drive nails for binding wooden boards,
like a skilled carpenter. This challenging task demands high payload capacity and precise control at high speeds to generate sufficient force for driving nails. The LERL policy, transferred from the segment $\rm E_{S}^1$ with online updates, has achieved precise control of a 370 g weighted hammer grasped by the elephant-trunk robot (Fig.~\ref{fig complex task}a), striking nails to bind two wooden boards. During the experiment, the robot tip's pose and velocity were measured using a motion-capture system. The measured data was then used as input to the policy network to generate control actions. As shown in Figs.~\ref{fig complex task}b-c and Supplementary Video 1, at the beginning of the task, the elephant-trunk robot was controlled to grip the hammer handed to it by a human. The robot then used the hammer to quickly drive nails into both sides of the wooden board, completing the task in 72 seconds, reaching a maximum speed of $\SI{0.78}{\, \meter \cdot \second^{-1}}$, and achieving an acceleration of $\SI{11.6}{\, \meter \cdot \second^{-2}}$. This task expands the capabilities of soft robots for daily assistive and industrial applications.

\subsection*{Serving drinks with safe interactions}
\vspace{-2mm}
With the rise of embodied intelligence, there has been growing interest in enhancing the intelligent manipulation capabilities of robotic arms,  focusing primarily on rigid body designs\upcite{gupta2021embodied}.
However, soft robots provide safer human-robot interactions compared to rigid robots, as rigid bodies with intelligent algorithms do not ensure absolute safety. Our transferred LERL policy enabled the elephant-trunk robot to perform the drinks-serving task with precise and smooth operation (Fig. \ref{fig complex task}d), emulating the actions of a skilled human bartender.   
As demonstrated in Figs. \ref{fig complex task}e-f and Supplementary Video 2, the soft robot interacts with the human safely through a fist bump and handshake, adaptively adjusts its position and orientation to pour drinks smoothly into the cup, throws the cup into the trash can, and hands the cup to the guest gently and safely. In addition, the robot can maintain stable operation under external human intervention (Supplementary Video 2) through online policy updates. This task demonstrates our approach's capability in adaptive control of soft robots under dynamic loads, highlighting the potential in daily delicate task operations that require safe human-robot interactions. 

\subsection*{Calligraphy brush handwriting}
\vspace{-2mm}
Calligraphy handwriting with soft arms and brushes is a challenging task. It requires the soft arm and brush to precisely adjust their movements to adapt to the dynamic pressure, speed, and angle required for fluid and clear strokes, while ensuring stability in each brushstroke. 
Our transferred LERL policy enables the hybrid-assembled soft robot to achieve smooth handwriting (Fig.~\ref{fig complex task}g), overcoming the challenges of high-precision control posed by dynamic variations across different segment types. It ensures that the brush tip, mounted at the robot's end, accurately and stably hovers at a specific height above the paper while precisely following the character strokes.  This avoids distortion or blurring of characters due to excessive force (caused by low height) or uneven speed, as demonstrated in Fig.~\ref{fig complex task}h-i and Supplementary Video 3. This result highlights the capability of our framework to enable hybrid soft robots with soft brushes to achieve handwriting capabilities previously demonstrated only by rigid robots with rigid pens.

\subsection*{Quick hand-eye reaction game}
\vspace{-2mm}
The control capabilities under high-speed conditions were evaluated on the worm robot for a challenging hand-eye reaction game (Fig.~\ref{fig complex task}j).
This task is commonly used by athletes and children to train hand-eye reaction and coordination speeds.
An autonomous machine equipped with sticks is hanging at a height of 0.48 meters above the robot, and the sticks are free to fall in an unknown and random order. A camera detects the movements of sticks to determine which stick falls. The soft robot is equipped with a net pocket at its tip, which moves quickly to catch the free-fall stick within 100 ms.  The robot with the transferred LERL policy reached speeds of up to $\SI{1}{\, \meter \cdot \second^{-1}}$ and intercepted all free-fall sticks (Figs.~\ref{fig complex task}k-l, Supplementary Video 4). This result demonstrates the potential of our LERL framework for challenging tasks that require high-performing control in high-speed conditions.

\section*{Discussion}
\vspace{-2mm}

This work presents a generalizable control system that achieves unprecedented policy adaptation across 30 distinct soft robot configurations, effectively bridging the longstanding gap between configuration versatility and scalable real-world deployment. By overcoming key limitations in configuration-specific modelling and control, our system addresses one of the primary barriers to the practical application of soft robotics\upcite{ren2024design}.

Conventional model-based approaches demand extensive dynamical remodelling for each new configuration, while model-free RL methods typically require either data-intensive real-world interactions or high-fidelity simulators, which are especially challenging to construct for soft robots due to their nonlinear and configuration-dependent dynamics. In contrast, our method achieves rapid adaptation without configuration-specific modelling or controller redesign, enabled by three critical factors. First, the fundamental dynamic characteristics of soft robots are encoded into a unified, pretrained linear latent space, transforming configuration-dependent nonlinear dynamics into structured, approximately linear variations within this shared space. Second, these variations are efficiently addressed through an online, data-efficient RL algorithm, eliminating the need for configuration-specific remodelling and retraining. This approach achieves superior performance while requiring 75× fewer training samples than prior methods, and operates with real-time efficiency (see Table~\ref{tab computational time}). Third, our control policy is directly learned and updated within the linear embedding space, mitigating the modelling–control gap that often undermines indirect optimal control strategies. As detailed in our theoretical analysis (Supplementary Section 8), this direct formulation substantially improves adaptation speed and control performance.

The presented framework enables fast multi-configuration adaptation across 30 configurations, demonstrating capabilities ranging from high-payload manipulation (\SI{1}{\, \kilogram}) to rapid locomotion ($\SI{1.89}{\, \meter \cdot \second^{-1}}$) and human-like skills such as bartending, carpentry, brush calligraphy, and quick hand-eye reaction. Unlike existing configuration-specific controllers, our method dynamically updates policies online in response to configuration changes, emulating the adaptability observed in natural organisms.  
 We believe this paradigm of online RL within a pretrained Koopman embedding space has the potential to establish future ``foundation” control strategies of heterogeneous robotic systems. This includes unified and adaptable control of both rigid and soft robots (see Supplementary Section 6 for an elementary
validation of rigid robots). 

The demonstrated effectiveness on multiple platforms suggests a broad impact potential in the medical, industrial, and exploratory domains, laying the foundation for the development of agile, embodied intelligent systems\upcite{rus2015design} capable of configuration adaptability. Furthermore, the real-time efficiency of policy training facilitates rapid deployment in time-sensitive applications such as surgical robotics, where rapid adaptation is essential for operational efficiency.

\clearpage




 \section*{Methods}

 \subsection*{\normalsize{Soft robotic platforms}} 
 \vspace{-2mm}
We developed three distinct types of soft robotic platforms for experimental validation, each differing in materials, stifnesses, structures, and actuation modes.
 \ \\
 \textbf{Type I soft robotic platform.}
The Type I soft robotic platform features an elephant-trunk-shaped arm with a modular honeycomb segment architecture. We engineered 22 unique configurations by systematically varying four key parameters: segment length, stiffness, quantity, and assembly sequence (Fig.~\ref{fig all config}). For motion control implementation, the Type I soft robotic platform (Fig.~\ref{fig robot elect system}a) integrates a motion capture module (Luster Swift30A) to measure the pose and velocity of the marked robot tip. The controller is implemented in Python on a CPU (Intel Core i9-13900HX), calculating control actions (50Hz) with measured real-time data related to the robot tip. These control actions are sent via a dSPACE MicroLabBox\upcite{wang2023dexterous} to proportional valves (SMC ITV2030-212CL), which regulate the airflow to inflate airbags, driving the arm's deflection and extension. 
\ \\
\textbf{Type II  soft robotic platform.}
 Each segment of the Type II robot consists of three parallel pneumatic muscles. Type II soft robotic platform includes three soft-muscle robots and two hybrid soft robots assembled using honeycomb and soft-muscle segments in varying orders and quantities (Fig.~\ref{fig all config}). The Type II soft robotic platform shares the same motion capture module and computing unit as the Type I platform (Fig.~\ref{fig robot elect system}a).
\ \\
\textbf{Type III soft robotic platform.}
Type III robots use a worm-like segment structure, including three configurations ranging from one to three segments (Fig.~\ref{fig all config}). The Type III soft robotic platform (Fig.~\ref{fig robot elect system}b) utilizes the same motion capture module but implements the controller in C++ on a CPU (Intel Core i9-13900HX). The calculated actions (50 Hz) are sent to an Arduino Mega 2560 microcontroller, generating analog signals to control vacuum proportional valves (HETIAN ITV2090-212CL5) connected to a vacuum pump. The valve control modulates the air chamber pressure, enabling dynamic arm deflection. 
        
 \subsection*{\normalsize{Overview of the LERL methodology}}
 \vspace{-2mm} 
Our three types of soft robotic platforms,  encompassing 30 distinct configurations (Fig.~\ref{fig all config}), are capable of executing diverse complex tasks. Although the fundamental control objective remains consistent across configurations—precise trajectory tracking through robotic tip manipulation—developing a generalizable controller with multi-configuration adaptability presents substantial challenges. Prior approaches, including model-based controllers\upcite{Bruder9477047,bruder2020data,pan2023auto,wang2022improved,haggerty2023control} and data-driven RL methods\upcite{satheeshbabu2020continuous,nazeer2024rl,naughton2021elastica}, demonstrate limitations in policy adaptation due to labor-intensive remodelling requirements and persistent sim-to-real transfer gaps.  

Here, we present a data efficient LERL framework (Fig.~\ref{fig:control-structure}), which captures intricate dynamic characteristics through Koopman operators and employs online model-free RL to adapt to dynamic variations across multiple configurations.
This framework offers two key features: First, it enables online model-free policy learning in the linear Koopman embedding space, facilitating rapid and data-efficient adaptation to new robot configurations without the computational burden of dynamics remodelling or parameter retraining. Second,  conventional modelling and control approaches often neglect the modelling errors that are intrinsic parts of robot dynamics or treat them merely as disturbances to be rejected, degrading control performance. In contrast, our method implicitly incorporates these modelling errors through direct policy optimization (as shown later in equation~\eqref{Eqn:u-optimal_o}), improving rather than degrading control performance.

Our methodology begins with data-driven training of the Koopman embedding from a single honeycomb segment, which maps the tip's pose and velocity into a linear embedding space (Fig.~\ref{fig:control-structure}a). 
Building upon this, a data efficient RL algorithm is deployed to update control policies in real-time to address configuration-specific dynamic variations, enabling rapid adaptation across diverse configurations via a base policy with online updates (Fig.~\ref{fig:control-structure}b). 

\subsection*{\normalsize{Offline Koopman embedding training}}
\vspace{-2mm}
This section outlines the offline training process for the shared Koopman embedding, which captures the fundamental dynamics of soft robots. First, training data from a single honeycomb robot segment are collected and normalized. Then, we construct the Koopman embedding function using an MLP and introduce the associated loss functions along with control-oriented regularizations to guide the training of the Koopman embedding.
\ \\
\textbf{Training data generation.}
We collected 20k samples of the input-state pair $\{x, u \}$ from the single honeycomb segment $\text{E}^{1}_\text{S}$ under sinusoidal excitation. The control input $u \in \mathbb{R}^{4}$ regulates the voltage applied to four proportional valves. The state of the robot tip is $x= \left[p_x, p_y, p_z, v_x, v_y, v_z, \theta_x, \theta_y, \theta_z,  w_x, w_y, w_z \right]^{\top} \in \mathbb{R}^{12}$ that captures the complete kinematic and dynamic profile of the robot tip, where $p_{\star}$, $v_{\star}$, $\theta_{\star}$, $w_{\star}$ represent the Cartesian coordinates of position, linear velocity, orientation angle, and angular velocity, respectively ($\star\in x,y,z$).
The state and control are normalized to the intervals $\left[-1,1\right]$ and $\left[0,1\right]$ for training.
\ \\
    \textbf{Embedding construction.}
    Koopman operators typically capture the intricate characteristics of nonlinear dynamics through observations in a linear embedding space. Consider a discrete-time dynamics model of soft robots: $x^+=f(x,u)$, where $x$, $x^{+} \in \mathbb{R}^{n}$ are the current and subsequent states sampled in a time interval $t_s$; the control $u\in\mathbb{R}^m$; 
$f$ is a state transition function. 
Let $u_{\infty}=\{u(i)\}_1^{\infty}$ be the collection of all control inputs in the control space, and $\Gamma$ be a left shift operator such that $u_{\infty}(k+1)=\Gamma u_{\infty}(k)$\upcite{proctor2018generalizing}. 
  An infinite-dimensional Koopman operator $\mathcal{K}$, acting on a scalar-valued observable $\psi(x,u_{\infty})$, is defined as\upcite{proctor2018generalizing,korda2018convergence}	$\mathcal{K} \psi(x, u_{\infty})=\psi(x, u_{\infty})\circ (f(x,u_{\infty}(0)), \Gamma u_{\infty})$. 
  The finite-dimensional approximation of $\mathcal{K}$ is typically derived using dynamic mode decomposition with controls\upcite{korda2018linear}, where a predefined collection of Koopman observables $\Psi(x,u)=(\Psi(x),u)\in\mathbb{R}^{n_{\psi}+m}$ is used, $\Psi(x) \in\mathbb{R}^{n_{\psi}}$ is usually selected as basis functions.
  
  Here, we adopt a trainable MLP for an enhanced representation of the embedding function $\Psi(x)$, with alternative approaches including Gaussian kernel functions and time-delay embeddings investigated in our ablation studies (Fig.~\ref{fig different embedding comp}). Letting $s=\Psi(x)$ be the lifted embedding state, one writes the Koopman model\upcite{zhang2021} as
		\begin{equation}\label{Eqn:linear_p-residual}
	{s}^+=A s+B u+ w(s,u),
		\end{equation}
where $A$ and $B$ are the model parameters to be learned\upcite{korda2018convergence}, $w(s,u)$ is the residual modelling uncertainty. 

Although conventional implementations typically incorporate the state $x$ as prior knowledge in constructing $\Psi(x)$, such methods often result in a dense state transition matrix $A$ with limited spectral regularization\upcite{zhang2021}.  Hence, we introduce a trainable transformation matrix $T\in\mathbb{R}^{12\times 12}$ that projects the state $x$ into a structured eigenspace, allowing for direct manipulation of eigenvalues.
 The resulting Koopman embedding is designed as
\begin{equation}\label{eqn:embedding-mlp}
    \Psi(x)=(Tx,\bar{\Psi}(Tx)),
\end{equation}
where $\bar{\Psi}(Tx)$ is implemented using an MLP with a 12-64-128-64-12 network structure. Each hidden layer employs Rectified Linear Unit (ReLU) activation functions. 
\ \\
\textbf{Loss functions.} 
The standard loss functions in Koopman embedding training include linear regression loss ($L_e$), reconstruction loss ($L_r$), and regularization on the matrix $T$ ($L_{T}$), detailed in Supplementary Section 1. To bridge the gap between regression accuracy and control performance, we introduce two control-oriented regularization terms: (a) eigenvalue regularization for open-loop stability guarantees and (b) singular value regularization for enhanced controllability. 
First, the sparse structure of $A$ is explicitly designed, combining $q$ parameterized real eigenvalues and $(n_{\psi}-q)/2$ conjugate eigenvalue pairs. Specifically, the regularization loss for $A$ is formulated as
\begin{equation}
        L_{A}=\sum_{j\in\mathcal{E}_A} \| \lambda(A)_{j} - \eta_1 \|^{2},
\end{equation}
where  $\lambda(A)_j$ represents the $j$-th eigenvalue of $A$, $\eta_1 <1$;  $\mathcal{E}_{A}$ is the collection of indices associated with eigenvalues that satisfy $\lambda(A)_j>\eta_1$.

With regularization term $L_A$, the eigenvalues of $A$ could be restricted within the unit circle, achieving balance between regression accuracy and system stability. However, the controllability matrix $
\mathcal{C} :=\begin{bmatrix}B &AB&A^2B& \cdots &A^{n_{\psi}-1}B\end{bmatrix}$ may exhibit undesirably small singular values corresponding to large eigenvalues of $A$, particularly those that lie within but approach the unit circle. This phenomenon can degrade closed-loop control performance and lead to suboptimal dynamic control behavior.
Hence,  we introduce an additional regularization loss term to shape the singular value distribution of $\mathcal{C}$, formulated as
\begin{equation}
    L_{\mathcal{C}} = \sum_{j\in\mathcal{E}_\mathcal{C}} \| \sigma(\mathcal{C})_{j} - \eta_2  \|^{2},
\end{equation}
where $\sigma(\mathcal{C})_j$ is the $j$-th singular value of $\mathcal{C}$, $\eta_2>0$;
$\mathcal{E}_\mathcal{C}$ is the collection of indices associated with singular values smaller than $\eta_2$.

The overall loss function used for training the Koopman embedding is as follows:
\begin{equation}\label{eq loss sum}
    L=\alpha_{1} L_e+\alpha_{2} L_r+\alpha_{3} L_{ T}+\alpha_{4} L_{ A}+\alpha_{5} L_{\mathcal{C}},
\end{equation}
where $\alpha_{i}$, $i=1,\cdots,5$ are the tunable parameters to balance each loss function.
Optimizing the loss in equation~\eqref{eq loss sum} generates the configuration-agnostic Koopman embedding $\Psi(x)$  along with the configuration-specific model parameters $A$ and $B$ in equation~\eqref{Eqn:linear_p-residual}. Only $\Psi(x)$ is shared among diverse configurations for online policy adaptation. 
 \subsection*{\normalsize{Online model-free policy learning}}
\vspace{-2mm}
This section presents the online, model-free policy learning approach with the shared configuration-agnostic Koopman embedding, enabling real-time and rapid adaptation to diverse new configurations. We first describe the policy design within the embedding space, followed by the learning process using a model-free $\mathcal{Q}$-learning algorithm. 
\ \\
\textbf{Policy design in the embedding space.}  Prior studies used an approximated nominal Koopman model $\hat{s}^+=A s + Bu,$ to formulate LQR\upcite{haggerty2023control} 
        and MPC\upcite{Bruder9477047} 
       approaches for soft robots, which often neglects the effect of modelling uncertainty $w(s,u)$ existing in the real Koopman model~\eqref{Eqn:linear_p-residual}.
However, even a minor modelling uncertainty $w(s,u)$ can compromise the stabilizability of $(A,B)$\upcite{zhang2021}. Also, the Koopman model that best fits model training data
could be suboptimal for controller design\upcite{formentin2021control}. To address this issue, we directly learn the control policy in the linear embedding space through input-state data.  The modelling uncertainty is hidden in the input-state data used during policy learning, which contributes to the policy optimization (proven in Section
S8).

%
The control policy of LERL consists of a dynamic feedback  policy and a feedforward  policy:
		\begin{equation}\label{Eqn:feedback-policy}
		u=u_e+u_r,
		\end{equation}
		where $u_e = Ks_e$ represents the dynamic feedback policy through the gain matrix $K\in\mathbb{R}^{m\times n_{\psi}}$ acting on the linear embedding space spanned by $\Psi(x)$,
        $s_e=s-s_r$ denotes the lifted state error with $s_r=\Psi(x_r)$ being the lifted reference state; 
         the feedforward policy $u_r$ maintains the robot tip steadily at its reference state $x_r$, which is precalculated offline as introduced in Supplementary Section 2.

  The gain matrix $K$ is learned online by solving an optimal control problem as follows:
	\begin{equation}\label{Eqn:optimiz}
	\min_{u_{e}} V(s_{e,0})=\sum_{k=0}^{+\infty}\gamma^k l(s_{e,k},u_{e,k}),
	\end{equation}
	subject to a Koopman error model ${s}_e^+=A{s}_e+Bu_e+ w_e(s_e,u_e)$ with $w_e(s_e,u_e):=w(s,u)-w(s_r,u_r)$,
    where $l(s_{e,k},u_{e,k})=\|s_{e,k}\|^2_{ Q}+\|u_{e,k}\|^2_R$;
 $Q\in\mathbb{R}^{n_{\psi}\times n_{\psi}}$, 
$R \in\mathbb{R}^{m\times m}$ are symmetric positive definite matrices; $0<\gamma\leq 1$ is a discount factor.
 
To perform model-free RL, the so-called $\mathcal{Q}$-function (state-action value function), i.e., $\mathcal{Q}(z_k) :=l(z_k)+\gamma V(As_{e,k}+Bu_{e,k}+w_{e,k})$, is written as  
	\begin{equation}\label{Eqn:q-funation-iteration}
  \mathcal{Q}(z_k)=z_k^{\top}H_oz_k+G(z_k):=z_k^{\top}{H}z_k,\vspace{2mm}
	\end{equation}
	where $z_k:=[s_{e,k}^{\top} , \ u_{e,k}^{\top}]^{\top}$;
    $H_o\in\mathbb{R}^{(m+n_{\psi})\times (m+n_{\psi})}$ and
        $ H=\begin{bmatrix}
	 H_{ss}& H_{su}\\
	 H_{us}& H_{uu}
    	\end{bmatrix}\in\mathbb{R}^{(m+n_{\psi})\times (m+n_{\psi})}$ are unknown matrices;
the term $G(z_k)$ originates from the modelling uncertainty $w(s,u)$ in equation~\eqref{Eqn:linear_p-residual}. Since it is directly tied to the input-state pair, we leverage $G(z_k)$ to improve control performance, rather than treating it as an uncertainty that degrades performance\upcite{zhang2021}.
%
 Setting $\partial \mathcal{Q}(z_k)/\partial u_{e,k}=0$ with~equation~\eqref{Eqn:q-funation-iteration} yields the analytical expression of the optimal feedback policy associated with $H$:
	\begin{equation}\label{Eqn:u-optimal_o}
      u_{e,k}= - H_{uu}^{-1} H_{us} s_{e,k},
	\end{equation}
	which contains the term $G(z_k)$ for control performance improvement. 
%
\ \\
\textbf{Model-free policy learning.} Here, we apply a model-free $\mathcal{Q}$-learning algorithm to approximate the matrix $H$ in equation~\eqref{Eqn:q-funation-iteration} and generate the optimal feedback policy with equation~\eqref{Eqn:u-optimal_o} in the linear embedding space, directly from the online measured data $\{s_k,u_k\}$ and the target information $\{s_{r,k}, u_{r,k}\}$. 
	We first construct a parameterized approximation of the $\mathcal{Q}$-function in equation~\eqref{Eqn:q-funation-iteration} as
	$\hat {\mathcal{Q}}(z,h)=z^{\top}\hat Hz=h^{\top}{\bm z}$, 
	where ${\bm z} :=z\otimes z$, and $h :={\rm vec}(\hat H)\in\mathbb{R}^{q(q+1)/2}$  is the vectorized form of $\hat H$, $q=n_{\psi}+m$. 
With an initialized $\hat H_0$,  our method updates $\hat H$ so that $\hat {\mathcal{Q}}(z,h)$ asymptotically converges to its optimal value. The optimal feedback policy is then naturally derived from the entries of $\hat H$ following equation~\eqref{Eqn:u-optimal_o}.

At the $i$-th iteration, an estimate of the $\mathcal{Q}$-function from equation~\eqref{Eqn:q-funation-iteration} is calculated as
	$d(z_{i,k},\hat H_i)=l(z_{i,k})+\gamma \|z_{i,k+1}\|_{\hat H_i}^2$.
	Then, $h_{i+1}={\rm vec}(\hat H_{i+1})$ is updated by minimizing the temporal difference loss as follows:
	\begin{equation}\label{Eqn:target-reach}
	\min_{h_{i+1}}\sum_{j=k-1}^{k-N}\|h_{i+1}^{\top}{\bm z}_j-d(z_{i,j},\hat H_i)\|^2.
	\end{equation} 
Let $Z_k=[{\bm z}_{[k-N]}\cdots {\bm z}_{[k-1]}]$ be a collected dataset composed of the past $N$-step input-state pairs, where
	${\bm z}_{[i]}=z_{[i]}\otimes z_{[i]}$. Solving equation~\eqref{Eqn:target-reach} obtains:
	\begin{equation}\label{Eqn:solution-least-square}
	h_{i+1}=(Z_kZ_k^{\top})^{-1}Z_kM_k,
	\end{equation}
	where $M_k=[d({\bm z}_{[k-N]},h_i)\cdots d({\bm z}_{[k-1]},h_i)]^{\top}.$ The inverse operation in equation~\eqref{Eqn:solution-least-square} requires that $Z_kZ_k^{\top}$ is of full rank. 
    To meet this condition, the number $N$ is selected greater than $q(q+1)/2$ and the policy with exploration noise $n_e$  
    is adopted during the initial learning stage.
    	If the matrix $Z_kZ_k^{\top}$ is full rank but ill-conditioned, calculating $h_{i+1}$ may introduce substantial errors. To mitigate this issue in online deployment, we propose two alternative strategies: first, employing warm-start techniques during policy learning (Supplementary Section 3); and second, utilizing a gradient descent-based update rule, such as the Adam optimizer.  
        The implementation procedures are deferred to  Algorithm S1. The theoretical guarantee for the online policy learning outlined above is provided in Supplementary Section 8.
\ \\
\textbf{Integral action in policy learning.}
Offline training the feedforward policy $u_r$ in equation~\eqref{Eqn:feedback-policy} relies on precollected quasi-static motion data for each robot configuration. To facilitate rapid policy generation, limited quasi-static data is used, which may introduce inaccuracies and potentially degrade control performance. To mitigate this issue, we incorporate an integral action into policy learning to compensate for these inaccuracies.
To this end, we define
$\dot q=C_ps_e$,
where $C_p=[I_6\ 0]$ and $q\in\mathbb{R}^6$ consist of the robot tip's position and angle.
Discretizing $\dot q$ with a time interval $t_s$ writes $ q^+=q+t_sC_ps_e$.
Defining an extended state $e_a :=[s^{\top}_e,q^{\top}]^{\top}$, one writes the augmented model as
\begin{equation} \label{Eqn:aug}
    e_a^{+}=A_ae_a+B_au_{e}+E_aw_e,
\end{equation}
where $
   A_a=[
        A_1^{\top}\ A_2^{\top}
    ]^{\top}$, $A_1=[
        A \ 0
    ]$, $A_2=[
        t_sC_p\ I_6
    ]$, $ B_a=[B^{\top}\ 0]^{\top}$, and $E_a=[I\ 0]^{\top}$.
Implementing Algorithm S1 with this augmented model generates the dynamic feedback policy $u_{e}$ that includes an integral action on the robot tip's pose error to compensate for feedforward inaccuracies.

\subsection*{\normalsize{Policy adaptation across multiple configurations}}
\vspace{-2mm}

This section describes how the control policy learned on the single honeycomb segment, with online updates, can be directly adapted to elephant-trunk robots, soft-muscle robots, hybrid-assembled robots, and worm robots, featured with distinct configurations and stiffnesses.
\ \\
\textbf{Policy adaptation to elephant-trunk robots.} Transferring the policy to new configurations of the elephant-trunk robot does not require extensive remodelling. Only 500 motion data samples are collected under quasi-static conditions to train the feedforward policy $u_r=\Phi(x_r)$ in equation~\eqref{Eqn:feedback-policy}. Here, $\Phi(\cdot)$ is a neural network with a 12-32-64-32-$m$ architecture and Rectified ReLU activation functions (Supplementary Section 2).
The feedback policy $u_e$ in equation~\eqref{Eqn:u-optimal_o}  is then refined online to adapt to new configurations in real-time.
   \begin{figure}[t!]
        \centering		
                \includegraphics[width= 0.85\textwidth]{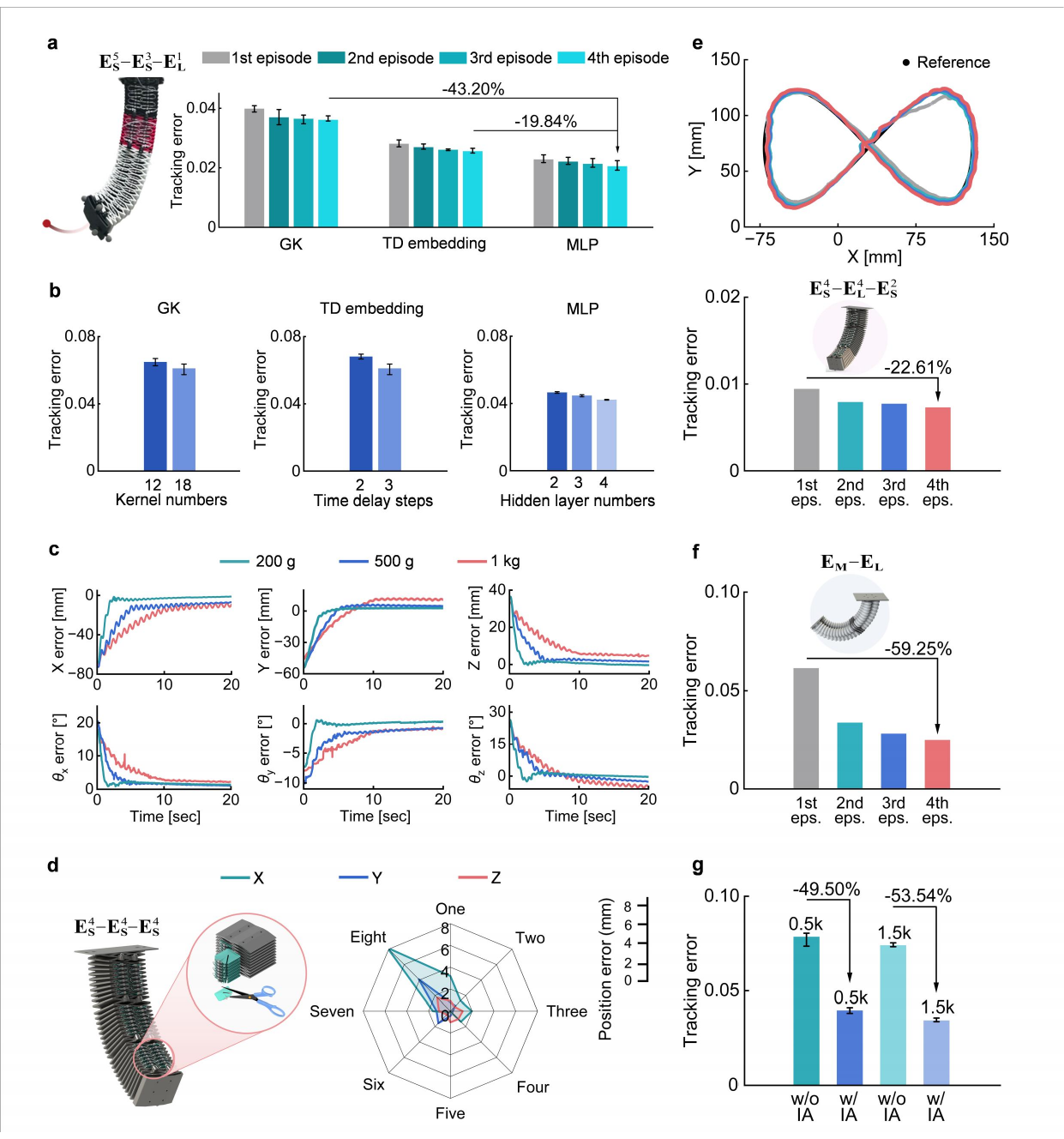}
        \caption{
        \textbf{Validation of learning ability and robustness.} 
        \textbf{a}, Online learning ability validation with different embeddings in target reaching of one elephant-trunk robot.
        \textbf{b}, Control performance under different embedding structures. 
        \textbf{c},  Pose error curves of the robot tip under different load conditions. 
        \textbf{d}, Empirical robustness validation under different numbers of broken airbags, $\text{Position error}:= \frac{1}{M}\sum_{i=1}^M \sqrt{\left\| p_{i,k} - p_{r, i,k} \right\|^2}$, $p_{i,k} = [p_{x, i,k}, p_{y, i,k},p_{z, i,k}]^{\top}$ and $p_{r, i,k} = [p_{r_x, i,k}, p_{r_y, i,k},p_{r_z, i,k}]^{\top}$ denote the real position and its reference at the $k$-th step for the $i$-th experiment, $M = 5$ is the number of experiments.
        \textbf{e}, Four-episode (eps.) learning performance in bow tie tracking of one elephant-trunk robot.
        \textbf{f}, Four-episode learning performance in target reaching of one worm robot.
        \textbf{g},
        Ablation study on integral action (IA) in policy learning. 
        }
        \label{fig different embedding comp}
    \end{figure}
    \ \\
\textbf{Policy adaptation to soft-muscle and hybrid-assembled robots.} Each soft-muscle segment has three control inputs, unlike the four in the honeycomb segment. To enable policy transfer from honeycomb to soft-muscle segments, an action re-allocation strategy is designed to map the four-dimensional control space to three through geometric transformations (Supplementary Section 5). This approach allows for the direct deployment of learned policies to heterogeneous soft-muscle and hybrid-assembled robots without requiring prior modelling.
\ \\
\textbf{Policy adaptation to worm robots.} The control policy of the elephant-trunk robot cannot be directly transferred to the worm robot, as they utilize different actuation modes (positive vs. negative pressure actuation). For policy adaptation to the worm robot, a policy initialization for $u_e$ is employed, utilizing 10k samples to calculate the associated gain matrix $K$ through KCID (Supplementary Section 7).
Subsequently, the learned policy for the configuration $\text{E}_\text{M}$ is transferred and updated online in the linear embedding space to accommodate other configurations of the worm robots. The effectiveness of rapid policy adaptation to heterogeneous worm robots is demonstrated in Fig.~\ref{fig transfer diveser confg}d. However, a key question arises: Is a newly trained Koopman embedding essential for policy transfer to heterogeneous robots? As shown in Fig.~\ref{fig appendix 1seg transfer 234segs sp}, policy transfer with a newly trained Koopman embedding in a worm segment (\text{LERL-new-transfer}) only slightly outperforms that using the pretrained Koopman embedding from the honeycomb segment (\text{LERL-transfer}). These results highlight that our framework enables rapid policy adaptation in heterogeneous robots without notable performance degradation.

\subsection*{\normalsize{Empirical validation}}
\vspace{-2mm}
We conducted comparative studies to verify the effectiveness of our framework, including: (i) online policy learning with different embeddings; (ii) robustness validation under actuator faults and high-payloads; (iii) online multi-episode learning performance; (iv) ablation study on the integral action; and (v) fast-moving traction stick tracking. 
\ \\
\textbf{Learning with different embeddings.}
We evaluated the effectiveness of our approach from two perspectives: the choice of embeddings and the ability to enhance control performance through online learning. As the embedding structure is not unique, we validate the effectiveness of LERL with embedding choices such as MLP, Gaussian kernel (GK), and time-delay (TD) embeddings.  
As shown in  Fig.~\ref{fig different embedding comp}a, online learning reduces tracking error across all adopted embeddings as the learning episodes increase, verifying the effectiveness of online policy learning in performance improvement. The MLP-based embedding achieves the best performance, reducing the tracking error by 43.2\% compared to GK and 19.84\% compared to TD embedding. 
Furthermore, increasing the kernel numbers of GK, time delay steps of TD embedding, and the hidden layers of MLP contribute to the reduction of the tracking error (Fig.~\ref{fig different embedding comp}b).
 \ \\
\textbf{Empirical robustness validation.}
 The empirical robustness of LERL was validated under varying payloads and multiple actuator faults.  Fig.~\ref{fig different embedding comp}c displays pose error curves (averaged over 5 trials) that further validate the robustness under different payloads, complementing the results in Fig.~\ref{fig transfer diveser confg}e. These curves demonstrate LERL's ability to maintain precise tracking in varying load conditions of \SI{200}{\, \gram}, \SI{500}{\, \gram}, and \SI{1}{\, \kilogram}. As shown in  Fig.~\ref{fig different embedding comp}d, LERL sustains $<$2 mm tracking error despite seven broken airbags (Supplementary Video 6). The resilience of LERL to hardware degradation, enabled by online learning, addresses a critical barrier for field applications, where material fatigue or actuator failure is inevitable.
\ \\
\textbf{Online multi-episode learning.}
The multi-episode online learning ability of our method was validated with elephant-trunk and worm robots. 
Our approach enables the elephant-trunk robot in configuration $\text{E}^{4}_\text{S}$-$\text{E}^{4}_\text{L}$-$\text{E}^{2}_\text{S}$  to accurately track circular trajectories (Fig.~\ref{fig different embedding comp}e). 
In addition, the tracking error decreases with each learning episode (reduced by 22.61\% after 4 learning episodes), verifying the online learning capability.
Similarly, for the worm robot configuration $\text{E}_\text{M}$-$\text{E}_\text{L}$, the tracking error in the achievement of the target gradually reduces as the learning episode grows (reduced by 59.25\% after 4 learning episodes), as shown in Fig.~\ref{fig different embedding comp}f.
A visual comparison of performance before and after learning is available in Supplementary Video 7.
\ \\
\textbf{Ablation study on integral action.}
The impact of integral action (IA) on policy learning was evaluated using feedforward policies trained with 0.5 and 1.5k quasi-static motion samples. As shown in  Fig.~\ref{fig different embedding comp}g, incorporating IA improves performance by 49.5\% and 53.54\% in the 0.5k and 1.5k sample scenarios, respectively. However, without IA, increasing feedforward policy training data from 0.5 to 1.5k yields only a marginal reduction in tracking error. 
These results demonstrate that IA effectively compensates for inaccurate feedforward policies and accelerates policy transfer to new configurations
with limited quasi-static motion data.
\ \\
\textbf{Fast-moving traction stick tracking.}
We evaluated LERL's capability in tracking rapidly moving targets using a human-operated traction stick. The experimental results demonstrate LERL's effectiveness in controlling the tip of an elephant-trunk soft robot (configuration $\text{E}^{5}_\text{S}$-$\text{E}^{3}_\text{L}$-$\text{E}^{1}_\text{L}$) to accurately track the fast-moving traction stick.
 Fig. \ref{fig track bar} and Supplementary Video 8 illustrate the system's rapid and precise response to the stick's motion. Additional robustness tests (Supplementary Video 8) confirm LERL's ability to maintain accurate tracking while recovering promptly from external disturbances.

\subsection*{Data availability}
All data needed to evaluate the conclusions in the paper are presented in
the paper and in the Supplementary Information.
Videos and source data are provided with the submission.

\subsection*{Code availability}
The pseudocode for the Koopman embedding pretraining process and the online model-free RL algorithm is available in the GitHub repository \href{https://github.com/xinglongzhangnudt/LERL-for-soft-robots}{https://github.com/xinglongzhangnudt/LERL-for-soft-robots}.

\bibliography{sn-bibliography}
\clearpage
\ \\
\textbf{Competing interests} 
The authors declare that they have no competing interests. 

\ \\
\textbf{Additional information} \\
\textbf{Supplementary information} The supplementary material is attached with the submission. \\
\textbf{Correspondence and requests for materials} should be addressed to Xinglong Zhang, Xin Xu, Dewen Hu.
\clearpage

 \begin{figure*}[!t]
    \centering
    \includegraphics[width=0.9 \textwidth]{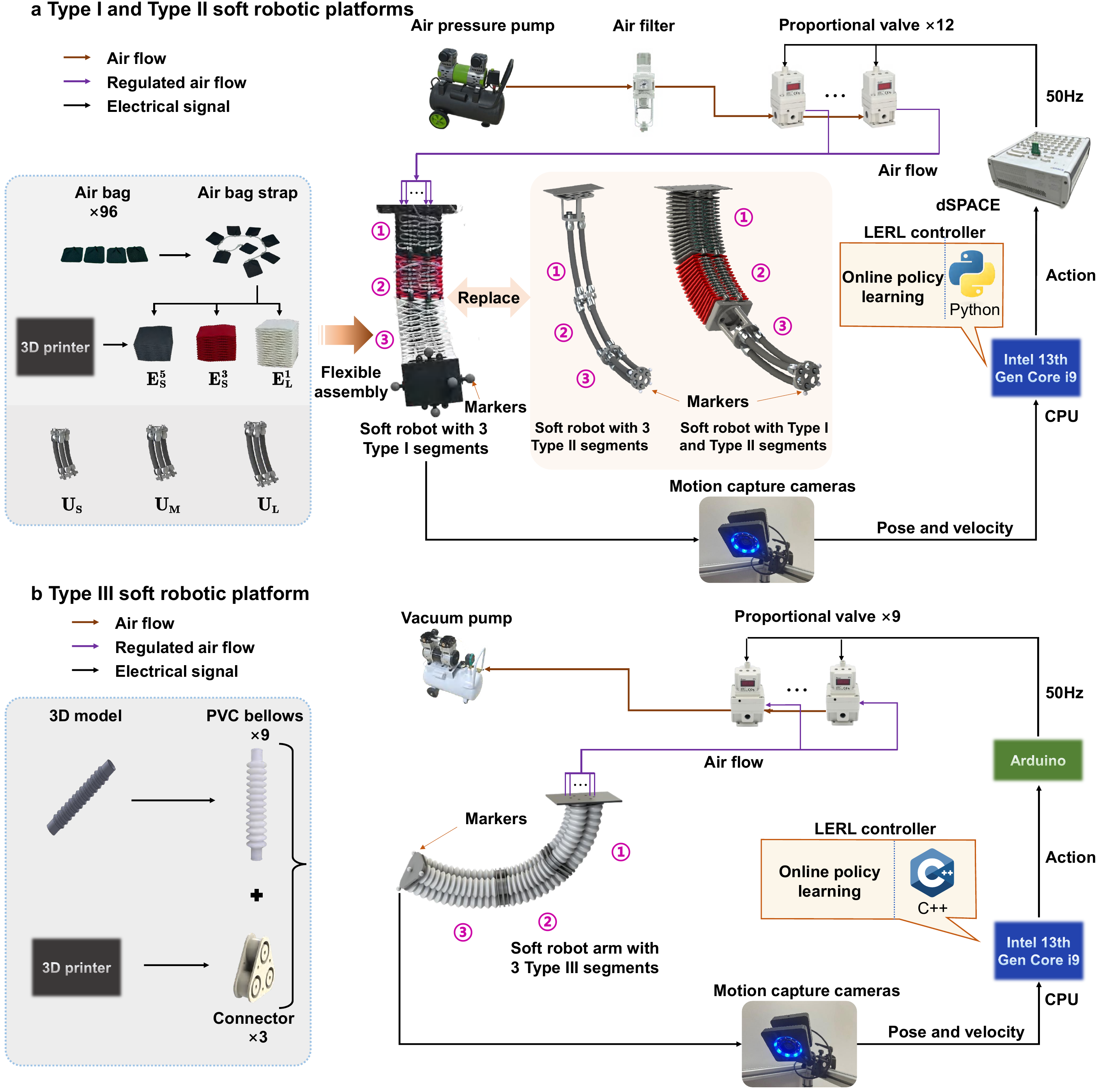}
    \caption{
   \textbf{a}, Type I and Type II soft robotic platforms in specific configurations. Two types of robots share the same motion capture module and the computing unit but with different pneumatic actuation systems across different configurations.
   \textbf{b}, Type III soft robotic platform in a particular setup.
   Three soft robotic platforms, with their multiple configurations (including hybrid-assembled configurations), can perform various tasks.
   }
    \label{fig robot elect system}
\end{figure*}

\clearpage

 \begin{figure*}[!t]
    \centering
    \includegraphics[width=0.9 \textwidth]{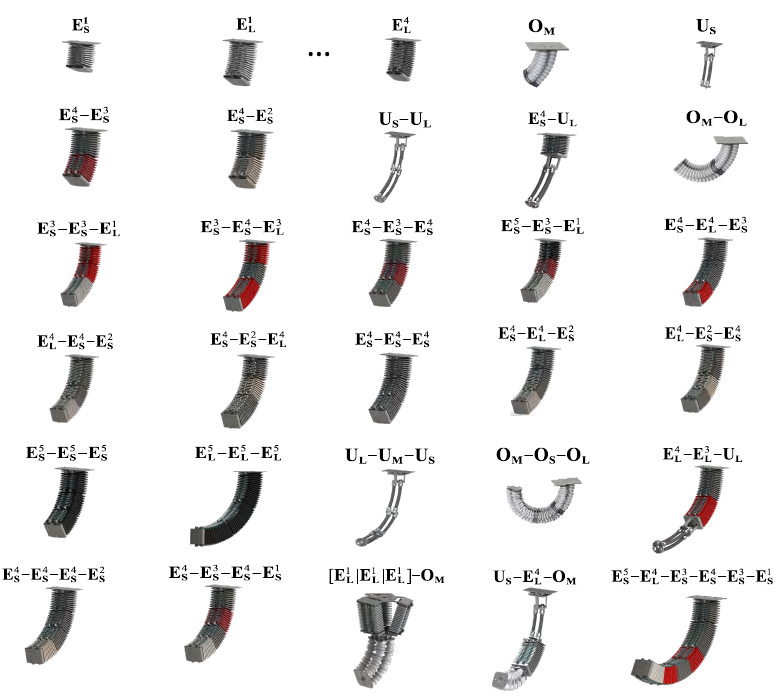}
    \caption{
    \textbf{List of 30 configurations of soft robots for validation.} 
    A library of modular segments includes eight honeycomb segments ($\rm E_S^1$, $\rm E_S^2$, $\rm E_S^3$, $\rm E_S^4$, $\rm E_L^1$, $\rm E_L^2$, $\rm E_L^3$, $\rm E_L^4$), three soft-muscle segments ($\text{E}_\text{S}$, $\text{E}_\text{M}$, $\text{E}_\text{L}$) and three worm segments ($\text{E}_\text{S}$, $\text{E}_\text{M}$, $\text{E}_\text{L}$), varying in length and stiffness. 
These segments are assembled to build 22 configurations of elephant-trunk robots that vary in segment length, stiffness, quantity, and assembly orders; 
three configurations of soft-muscle robots and two configurations of hybrid-assembled robots;
and three configurations of worm robots ranging from one to three segments.
    }
    \label{fig all config}
\end{figure*}

\clearpage

  \begin{figure*}[!t]
    \centering
    \includegraphics[width= 1\textwidth]{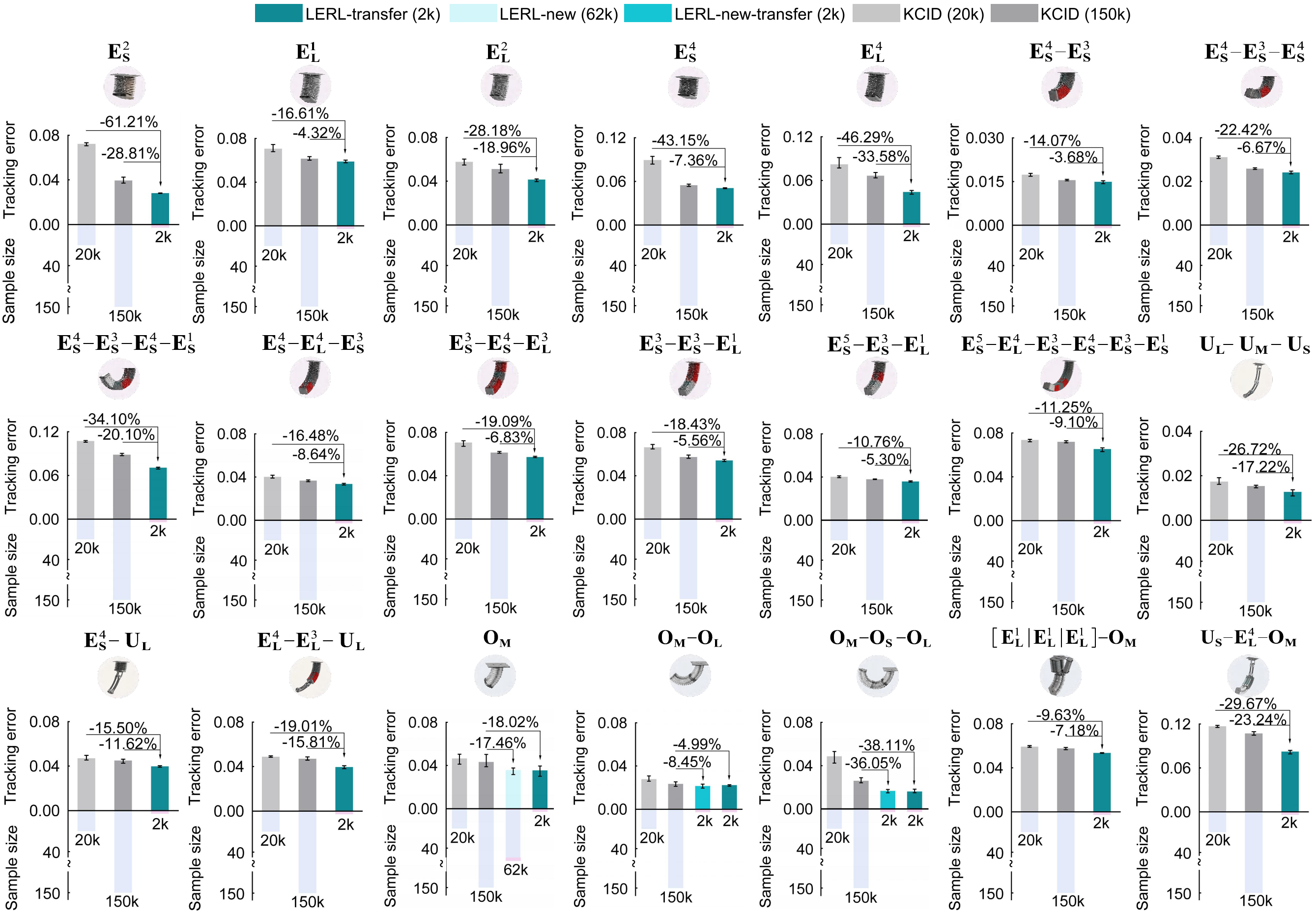}
    \caption{
\textbf{Auxiliary policy adaptation performance across diverse configurations in target reaching tasks.}
Target reaching performance comparisons across 18  configurations from three soft robot prototypes demonstrate that
LERL outperforms KCID in both tracking error and sample size reduction, $\text{Tracking error} := \frac{1}{M \cdot N} \sum_{i=1}^{M} \sum_{k=1}^{N}  \left\| \bar{x}_{i,k} - \bar{x}_{r, i,k} \right\|^2$, $\bar{x}_{i,k}$, $\bar{x}_{r, i,k} \in \mathbb{R}^{12}$ represent the  normalized real state
variable (pose and velocity)  and its reference at the $k$-th step for the $i$-th experiment, $N$ denotes the algorithm horizon, and $M = 5$ is the number of repeated experiments.
The policy adaptation to worm robots' configurations uses policy learning with a newly trained Koopman embedding (LERL-new). The  Koopman embedding for the single worm segment $\text{E}_\text{M}$ is trained with 20k samples, and the policy is updated online from scratch using 2k samples. The policy is then transferred and refined online (LERL-new-transfer) to different configurations with 2k samples under the newly trained Koopman embedding. 
}
    \label{fig appendix 1seg transfer 234segs sp}
\end{figure*}

\clearpage
  \begin{figure*}[!t]
    \centering
    \includegraphics[width= 0.9\textwidth]{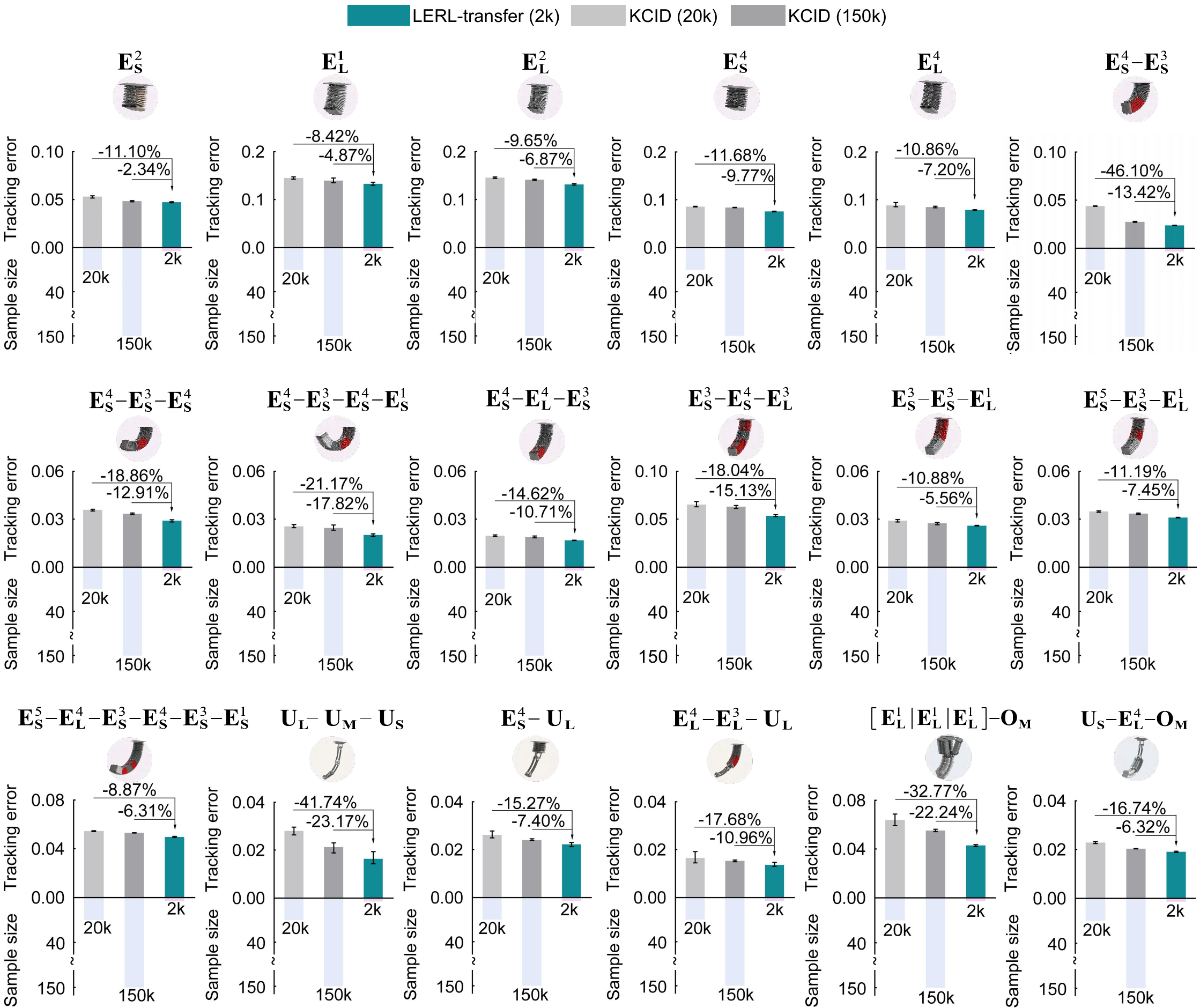}
    \caption{
\textbf{Auxiliary policy adaptation performance across diverse configurations in bow tie tracking tasks.}
Bow tie tracking performance comparisons across 15 configurations from three soft robot prototypes demonstrate that LERL outperforms KCID in both tracking error and sample size reduction.
    }
    \label{fig appendix 1seg transfer 234segs bowtie}
\end{figure*}


\clearpage

 \begin{figure*}[h!t]
    \centering
    \includegraphics[width=0.7 \textwidth]{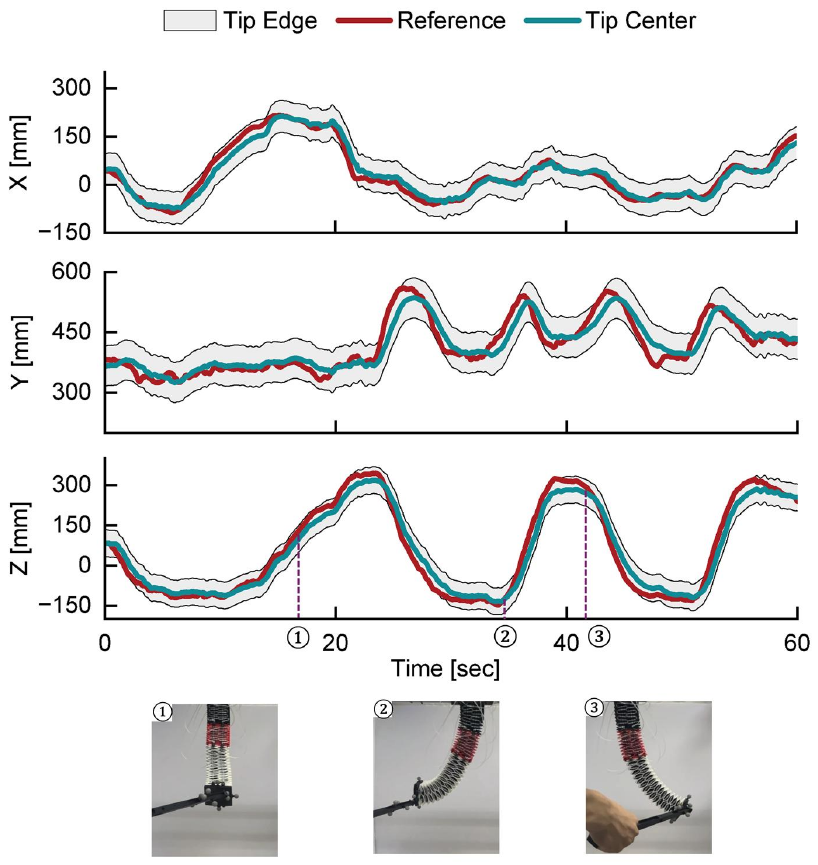}
    \caption{
    \textbf{Fast-moving traction stick tracking performance.}
    Tracking curves in the $x$, $y$, and $z$ directions, along with the pose of the soft robot at selected moments.
    }
    \label{fig track bar}
\end{figure*}
\clearpage

\begin{table*}[h!tb]
        \centering \caption{Computational (comp.) time per step on the Type I elephant-trunk robots.
        The computation is conducted on a laptop equipped with a 13th Gen Intel (R) Core (TM) i9-13900HX processor, featuring a base clock speed of 2.20 GHz.}
        \label{tab computational time}
		\vskip 0.2cm
        \begin{threeparttable}
            \scalebox{0.75}{
            \renewcommand{\arraystretch}{1.2}
            \begin{tabular}{ccccc}
                \toprule
                                        \multirow{1}{*}{\textbf{Robot configuration}}& 
                                        \multirow{1}{*}{\textbf{Input dimension}}
                        &\multirow{1}{*}{\textbf{Lifted space dimension}} 
                        &\multirow{1}{*}{\textbf{Average comp. time (ms)}} 
                        &\multirow{1}{*}{\textbf{Max comp. time (ms)}} 
                     \\
                \midrule
                        \multirow{1}{*}{One segment} 
                        & 4
                        & 24
                        & 1.862
                        & 4.004
                        \\ 
                \midrule
                        \multirow{1}{*}{Two segments} 
                        & 8
                        & 24
                        & 2.429
                        & 5.001
                        \\ 
                \midrule
                        \multirow{1}{*}{Three segments} 
                        & 12
                        & 24
                        & 3.508
                        & 8.001
                        \\ 
                \midrule
                        \multirow{1}{*}{Four segments} 
                        & 16
                        & 24
                        & 3.831
                        & 8.998
                        \\ 
                \bottomrule
	    \end{tabular}
        }
            \end{threeparttable}
    \end{table*}
\clearpage

\makeatletter
\renewcommand{\fnum@figure}{\textbf{Supplementary Fig. \thefigure}}
\renewcommand{\fnum@algorithm}{\textbf{Supplementary Algorithm \thealgorithm}}
\renewcommand{\fnum@table}{\textbf{Supplementary Table \thetable}}
\makeatother

\renewcommand{\theequation}{S\arabic{equation}}
\renewcommand{\thepage}{S\arabic{page}}
\setcounter{figure}{0}
\setcounter{table}{0}
\setcounter{equation}{0}
\setcounter{page}{1} 

\title{
\centering \textbf{Supplementary Information} \\[1ex]
\centering \textbf{Reinforcement learning in linear embedding space unlocks generalizable control} \\
\centering \textbf{across soft robot configurations}\\
}


\maketitle 
\ \newline
\textbf{Supplementary information contains:} 
\ \\
Supplementary Section 1. Koopman embedding training \\
Supplementary Section 2. Feedforward  policy calculation\\
Supplementary Section 3. Policy learning with warm-start\\
Supplementary Section 4. Pseudocode of LERL\\
Supplementary Section 5. Action space re-allocation\\
Supplementary Section 6. Policy adaptation to rigid-body robots\\
Supplementary Section 7. Feedback policy initialization\\
Supplementary Section 8. Theoretical results\\
Supplementary Table 1. Hyperparameters\\
Supplementary Video 1. Hammering nails task\\
Supplementary Video 2. Serving drinks with safe interactions\\
Supplementary Video 3.  Calligraphy brush handwriting\\
Supplementary Video 4. Quick hand-eye reaction game\\
Supplementary Video 5.  Policy adaptation performance validation\\
Supplementary Video 6.  Empirical robustness validation\\
Supplementary Video 7.  Performance improvement from online learning\\
Supplementary Video 8. Fast-moving traction stick tracking\\
\newpage

\clearpage

\subsection*{Supplementary Section 1. Koopman embedding training}
The 20k collected samples are used to train the Koopman embedding on a laptop with an NVIDIA GeForce RTX 4090 GPU. Hyperparameters for offline Koopman embedding training are provided in  Supplementary Table~1.

The regression loss $L_e$ is the mean squared error between the lifted actual and predicted states over $p$ steps, which measures the prediction accuracy in the embedding space, i.e., 
\begin{equation}
    L_e = \frac{1}{p} \sum_{i=1}^{p} \|s_{k+i} - \hat{s}_{k+i} \|^{2},
\end{equation}
where $\hat{s}_{k+i} = A^i s_k + \sum_{j=1}^iA^{j-1}B u_{k+i-j}$ is the $i$-th step ahead state prediction in the linear embedding space from time instant $k$. 

The reconstruction loss $L_r$ is the mean squared error between the actual states and their predicted reconstructed counterparts over $p$ steps, that is,
\begin{equation}\label{eqn:recover}
    L_r = \frac{1}{p} \sum_{i=1}^{p} \|x_{k+i} - \hat{x}_{k+i}\|^{2},
\end{equation}
where the $\hat{x}_{k+i}=T^{-1} s_{k+i}$ is the prediction of the reconstruction state. 


To guarantee the matrix $T$ in~equation~\eqref{eqn:recover} to be invertible, we parameterize $T$ as a trainable tensor and  introduce the following regularization loss: 
\begin{equation}
        L_T= \sum_{i=1}^{12} \| \lambda(T)_{i}-1\|^{2},
\end{equation}
to shape its eigenvalues near the unit circle.

\clearpage

\subsection*{Supplementary Section 2. Feedforward  policy calculation}
A feedforward  policy $u_r$ is incorporated into the control strategy (equation~\eqref{Eqn:feedback-policy} in Methods) to maintain the robot tip in the desired pose when the dynamic feedback policy $u_e = Ks_e$ vanishes. However, this static feedforward  policy is not directly available, as the robot tip's static model is unknown. 
We address this challenge by learning the feedforward policy offline using precollected quasi-static motion data\upcite{haggerty2023control}.
The embedding network, an MLP with an architecture of 12-32-64-32-$m$ and Rectified Linear Unit (ReLU) activation functions, is adopted to generate $u_r$. 
Here, $m$ denotes the input dimension of the current robot configuration, determined by the robot type and the number of segments. 
The Adam optimizer is adopted to train the network parameters offline, using $N_f=500$ quasi-static motion samples of $\{u_r,x_r\}$ to minimize the following mean squared error:
\begin{equation} \label{eq ff policy}
     E_r = \frac{1}{N_f} \sum_{i=1}^{N_f} \|u_{r,i} - \Phi(x_{r,i})\|^{2}.
\end{equation}
The resulting feedforward policy for each robot configuration is represented as $u_r=\Phi(x_r)$, where $\Phi(\cdot)$ is the pretrained MLP tailored for each robot configuration.
\clearpage

\subsection*{Supplementary Section 3. Policy learning with warm-start}
Here we present the implementation of policy learning with warm-start to further enhance learning efficiency. Even in scenarios with nonlinear system dynamics, a stabilizing control policy can still be effectively derived. In this work, a straightforward way is to carry out a model-based control design using offline learned Koopman models, which makes full use of the baseline control policy $u_{e,b}$ and the associated $H_b$ matrix.

Let $\Delta H_i=H_b-\hat H_i$, equation~\eqref{Eqn:target-reach} in Methods is equivalent to solving:
\begin{equation}\label{Eqn:target-reach-1}
\min_{\Delta h_{i+1}}\|\Delta h_{i+1}^{\top}{\bm z}_k-\Delta d(z_k,\Delta H_i)\|^2,
\end{equation} 
where $\Delta d(z_k,\Delta H_i)=h_b^{\top}{\bm z}_k-d(z_k,\hat H_i)$.

Then,  we derive the update for $\Delta h_{i+1}$ as 
\begin{equation}\label{Eqn:solution-least-square-delta}
\Delta h_{i+1}=(ZZ^{\top})^{-1}Z\Delta W,
\end{equation}
where $
\Delta W=[\Delta d({\bm z}_{[1]},\Delta h_i)\cdots \Delta d({\bm z}_{[N]},\Delta h_i)]^{\top}.$
\begin{remark}
Note that the baseline control policy $u_{e,b}$ is not necessarily computed from a model-based controller design. The associated matrix $H_b$ could be estimated by recursively solving equation~\eqref{Eqn:target-reach} with an available control policy $u_{e,b}$.   
\end{remark}

Let a lower bound of the computational residual error of $h_i$ be $\epsilon_{h,i}$.
The following proposition states the advantage of equation~\eqref{Eqn:solution-least-square-delta} over equation~\eqref{Eqn:solution-least-square} in numerical precision.
\begin{proposition}[Numerical precision error]
	If $H_b$ is such that $\Delta H_i\succ 0$  and there exists $\delta>1$ with $\delta \Delta d\leq d$, then the numerical precision error of $h_i$ with the operation in equation~\eqref{Eqn:solution-least-square-delta} is smaller than $\delta^{-1}\epsilon_{h,i}$. 
\end{proposition}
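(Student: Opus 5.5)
The argument is a relative-error comparison: the two least-squares problems (the original update for $h_{i+1}$ and the warm-start problem for $\Delta h_{i+1}$) are solved with the same data matrix $Z$. Their right-hand sides differ only by the known shift $Z_k^{\top}h_b$. Floating-point residual error of a least-squares solve scales with the magnitude of the right-hand side (and of the unknown), with the same conditioning factor of $ZZ^{\top}$ in both cases. So it suffices to show that the warm-start target is smaller by at least a factor $\delta$.

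\textbf{Step 1 (model the error).} Writing $\epsilon_{h,i}$ for the residual error of the original solve, I would model the computed solution as
\[
(ZZ^{\top})^{-1}Z M + e, \qquad \|e\|\le c\,\kappa(ZZ^{\top})\,\|M\|\,\mathrm{u}.
\]
Here $\mathrm{u}$ is the machine precision. Identify $\epsilon_{h,i}$ with this bound evaluated on $M_k=[d(\cdot,\hat H_i)]$. For the warm-start solve, the same perturbation model gives an error of at most $c\,\kappa(ZZ^{\top})\|\Delta W\|\,\mathrm{u}$. Then $h_{i+1}=h_b-\Delta h_{i+1}$ is recovered exactly up to one subtraction. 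This rounding contributes only an $O(\mathrm{u})$ relative term, which I would absorb or neglect as standard.

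\textbf{Step 2 (compare right-hand sides).} Use the hypothesis $\delta\,\Delta d\le d$ componentwise. Since $\Delta H_i\succ 0$, each $\Delta d$ entry is a positive-definite quadratic form, so $0\le \Delta d\le d/\delta$ componentwise. This gives $\|\Delta W\|\le \delta^{-1}\|M_k\|$ in any monotone norm, e.g.\ the Euclidean norm. Nonnegativity is exactly where $\Delta H_i\succ0$ enters, since it rules out sign cancellations that could make $|\Delta d|$ exceed $d/\delta$. Combining with Step 1, the warm-start error is at most $c\,\kappa\,\mathrm{u}\,\delta^{-1}\|M_k\|=\delta^{-1}\epsilon_{h,i}$. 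Strict inequality follows from $\delta>1$ together with the lower-bound interpretation of $\epsilon_{h,i}$.

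\textbf{Main obstacle.} The difficulty is making ``numerical precision error'' precise. Since $\epsilon_{h,i}$ is defined only loosely as a lower bound on the residual error, I would fix the linear-in-right-hand-side error model above and state the result relative to that model. The delicate point is justifying that positivity of $\Delta H_i$ guarantees that the componentwise sign condition holds for $\Delta d$. If the definition of $\Delta d$ (with $h_b^{\top}\bm z_k-d$) does not give a quadratic form directly, I would instead simply use $|\Delta d|\le d/\delta$ as the intended reading of the hypothesis.
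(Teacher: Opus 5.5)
Your route is the paper's route. The paper also treats the rounding error as linear in the right-hand side, with a factor shared by both solves: it writes $\epsilon_{h,i+1}=\epsilon_{z,i}ZW$ and $\epsilon_{\Delta h,i+1}=\epsilon_{z,i}Z\Delta W$, where $\epsilon_{z,i}$ is the error in forming $(ZZ^{\top})^{-1}$. It then uses $\delta\Delta d\le d$ to shrink the right-hand side by $\delta$, and recovers $\hat H_i$ from $H_b$ and $\Delta H_i$. Your $c\,\kappa(ZZ^{\top})\,\mathrm{u}\,\|M\|$ bound is just a more explicit version of that model.

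The one inference that fails as written is in Step 2: $\Delta H_i\succ 0$ does not make $\Delta d$ a positive-definite quadratic form. Substituting $\hat H_i=H_b-\Delta H_i$ into $\Delta d=h_b^{\top}\bm z_k-d(z_k,\hat H_i)$ gives
\[
\Delta d=\bigl(z_k^{\top}H_bz_k-l(z_k)-\gamma z_{k+1}^{\top}H_bz_{k+1}\bigr)+\gamma z_{k+1}^{\top}\Delta H_i z_{k+1}.
\]
The bracket is the Bellman residual of $H_b$ on the collected data, and it has no sign in general. So $\Delta d\ge 0$ holds only under an extra assumption: that $H_b$ satisfies the Bellman equation on the data. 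Then $\Delta d=\gamma z_{k+1}^{\top}\Delta H_i z_{k+1}\ge 0$, which is presumably what $\Delta H_i\succ0$ is meant to encode. Otherwise you need your fallback reading $|\Delta d|\le d/\delta$, which is the condition the argument actually uses.

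The paper's proof glosses over the same point. It passes from the componentwise inequality $\delta\Delta d\le d$ to $\epsilon_{z,i}Z\Delta W\le\delta^{-1}\epsilon_{z,i}ZW$ without any sign information. Your monotone-norm comparison is the cleaner way to make that step.

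Finally, either error model only yields $\le\delta^{-1}\epsilon_{h,i}$. The ``lower bound'' interpretation of $\epsilon_{h,i}$ does not give strictness, and it clashes with your identifying $\epsilon_{h,i}$ with an upper bound. State the non-strict version, which is all the paper's own chain of inequalities actually delivers.
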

\textbf{Proof}.  Note that the numerical computational error is due to the inverse operation and the weak fulfillment of $ZZ^{\top}$ being full rank. Let the error in operation $(ZZ^{\top})^{-1}$ associated with the $i$-th iteration be $\epsilon_{z,i}$. Then, we have $\epsilon_{h,i+1}=\epsilon_{z,i}ZW$. Similarly, the numerical error of $\Delta h_{i+1}$ is $\epsilon_{\Delta h,i+1}=\epsilon_{z,i}Z\Delta W$. In view of the fact $\delta \Delta d\leq d$ and the definition of $W$ and $\Delta W$, we have $\epsilon_{\Delta h,i+1}\leq\epsilon_{z,i}Z \delta^{-1}W=\delta^{-1}\epsilon_{h,i}$. Note that $\hat H_i=\Delta H_i+H_b$, hence the computation of the vectorization of $\hat H_i$ using the operation in equation~\eqref{Eqn:solution-least-square-delta} results in a numerical precision error smaller than $\delta^{-1}\epsilon_{h,i}$.
\hfill     $\square$

\clearpage

\subsection*{Supplementary Section 4. Pseudocode of LERL}
The main implementation steps of linear embedding RL (LERL) for policy learning across diverse configurations are outlined in Algorithm~\ref{Atm:1}. These steps include offline Koopman embedding pretraining and online policy learning. 
The offline phase establishes baseline capabilities, while the online phase allows adaptive refinement for improved performance. The online update of $h$ can follow three strategies: (a) least-square (LS) rule: update $h$ using equation \eqref{Eqn:solution-least-square}; (b) LS rule with warm-start: update $\Delta h$ using equation~\eqref{Eqn:solution-least-square-delta} and set $h = h_b-\Delta h$; (c) gradient-descent rule: update $h$ using the Adam optimizer.

\begin{algorithm}[ht!]
	\caption{Pseudocode of LERL for multiple configurations}\label{Atm:1}
    	\textbf{Offline design  on single honeycomb segment:}\vspace{-3.5mm}\\
	{\color{black}	\begin{algorithmic}[1]
                \State Collect open-loop input-state data;
			\State  Construct $\Psi(x)$ with equation~\eqref{eqn:embedding-mlp} using an MLP;
			\State  Offline train the Koopman embedding with the loss in equation~\eqref{eq loss sum}; 
            \State  Offline train the feedforward policy  using equation~\eqref{eq ff policy};
	\end{algorithmic}}
    	\dotfill \\
\textbf{Offline design for new configurations:}\vspace{-3.5mm}\\
    \begin{algorithmic}[1]
                \State  Collect 500 input-state samples in the quasi-static condition;
            \State  Offline train the feedforward policy for each new configuration using equation~\eqref{eq ff policy};
			  \vspace{-2mm}
	\end{algorithmic}
	\dotfill \\
    	\textbf{Online policy learning for diverse configurations:}
	\begin{algorithmic}[1]
	\Require $i=0$,  $N_{\rm max}$, $\{x_r\}$. 
	\State  Determine the input dimension $m$ of  the current robot configuration;
        \State Initialize $u_e$ and $\hat H$ with those from the transferred policy;
\While{$i \leq N_{\rm max}$}
\State Measure state $x_i$, generate $s_r$ and $s_{i}$ using the pretrained Koopman embedding;
    \State  Generate the feedforward policy $u_{r}=\Phi(s_r)$ for the current configuration;
\State Calculate $u_{e,i}=-\hat{H}_{uu,i}^{-1}\hat{H}_{us,i}s_{e,i}+n_{e,i}$, in which $n_{e,i}$ is applied in initial several steps;
\State Apply $u_i = u_{e,i} + u_{r}$ to the robot and calculate $d(z_{i},\hat H_i)=l(z_{i})+\gamma \|z_{i}^+\|_{\hat H_i}^2$; 
        \State 	Update $h_{i+1}$; // Choose strategy (c) (Adam optimizer) with limited data, or strategies (a) and (b) (LS rule and LS rule with warm-start) when sufficient data is available. 
\State Construct $\hat H_{i+1}$ with $h_{i+1}$;
        \State Set $i=i+1$;
\EndWhile
	\end{algorithmic}
\end{algorithm}

\clearpage

\subsection*{Supplementary Section 5. Action space re-allocation}
\textbf{Policy transfer to elephant-trunk soft robots.}
We adopt the padding procedure to transfer the feedback policy $u_e = Ks_e = K (\Psi(x)-\Psi(x_d))$, $K\in \mathbb{R}^{4 \times 24}$, trained on the honeycomb segment $\text{E}^{1}_\text{S}$, to elephant-trunk soft robots in multiple configurations. For a configuration with $n$ segments, we transfer the policy gain $K$ by replicating it $n$ times, forming $K_{n,0} =[K;\cdots;K] \in \mathbb{R}^{4n \times 24}$. The policy gain $K_{n,0}$ then initializes the policy learning process for the new configuration and is fine-tuned online.
\ \\
\textbf{Policy transfer to soft-muscle robots.}
Each elephant-trunk segment has four control inputs, while the soft-muscle segment has three. To allow direct policy transfer from the elephant-trunk robot to the soft-muscle robot, we use an action re-allocation strategy. In particular, we first represent the transferred policy from the elephant-trunk robot as a virtual control policy $\bar{u} \in \mathbb{R}^4$ for the soft-muscle robot, which is then mapped to three channels using:
\begin{equation}\label{eq action mappling}
   u = T_a \cdot \bar{u},
\end{equation}
where the action distribution matrix $T_a \in \mathbb{R}^{3 \times 4}$ is derived from equation~\eqref{eq type 3 tran 3} to equation~\eqref{eq type 3 tran 4}.

The above re-allocation strategy allows directly transferring the control policy from the elephant-trunk robot to the soft-muscle and hybrid-assembled robots, without any prior modelling procedure. 
\begin{figure*}[h!t]
    \centering
    \includegraphics[width=0.75 \textwidth]{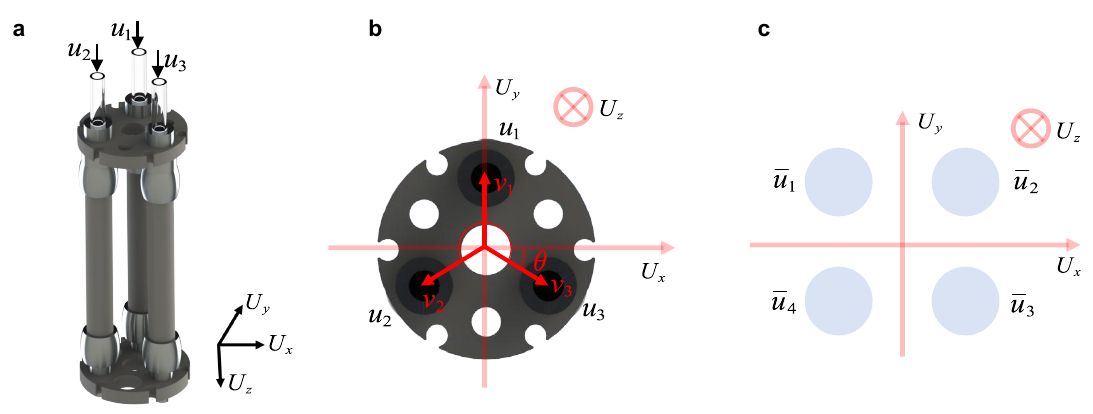}
    \caption{
    \textbf{Mechanism of the action re-allocation strategy.} 
\textbf{a}, Illustration of the equivalent actuation $U$ of a soft-muscle segment.
\textbf{b}, Illustration of the intermediate variable $v$, $\theta = \pi/6$.
\textbf{c}, Illustration of the virtual control $\bar{u}$.
    }
    \label{fig control allocation}
\end{figure*}
\ \\
\textbf{Calculation of matrix $T_a$.} 
The soft-muscle segment has two bending degrees of freedom (DoF) and one elongation DoF, represented by the introduced equivalent actuation $U = [U_x, U_y, U_z]^{\top}$ (Supplementary Fig.~\ref{fig control allocation}a). 
Using the virtual variable $v=[v_1, v_2, v_3]^{\top}$ as a bridge (Supplementary Fig.~\ref{fig control allocation}b), the following matrix $T_{a_{1}}$ is computed to map the real actuation $u = [u_1, u_2, u_3]^{\top}$ to the equivalent actuation $U$, i.e.,
\begin{equation}\label{eq type 3 tran 3}
\begin{array}{ll}
\begin{bmatrix}
    U_x \\
    U_y \\
    U_z \\
\end{bmatrix}
= 
\underbrace{
\begin{bmatrix}
    0 & 3\sqrt{3}/4 & -3\sqrt{3}/4 \\ 
   -3/2 & 3/4 &3/4 \\ 
    1/3 & 1/3 & 1/3 \\ 
\end{bmatrix}
}
\begin{bmatrix}
    u_{1} \\
    u_{2} \\
    u_{3} \\
\end{bmatrix}. \\
\hspace{23mm} T_{a_{1}}
\end{array}
\end{equation}

The virtual actuation $\bar{u}= [\bar{u}_1, \bar{u}_2, \bar{u}_3, \bar{u}_4]^{\top}$ (Supplementary Fig.~\ref{fig control allocation}c) is mapped to the equivalent actuation $U$ through a matrix $T_{a_{2}}$, i.e.,
\begin{equation} \label{eq type 1 tran 1}
\begin{array}{ll}
\begin{bmatrix}
    U_x \\
    U_y \\
    U_z \\
\end{bmatrix}
= \underbrace{\begin{bmatrix}
    1 & 1 & -1 & -1\\ 
    -1 & -1 & 1 & 1\\ 
    1 & 1 & 1 & 1\\ 
\end{bmatrix}}
\begin{bmatrix}
    \bar{u}_{1} \\
    \bar{u}_{2} \\
    \bar{u}_{3} \\
    \bar{u}_{4} \\
\end{bmatrix}.  \\
\hspace{21mm} T_{a_{2}}
\end{array}
\end{equation}
Combing equation \eqref{eq type 3 tran 3} with equation \eqref{eq type 1 tran 1}, one has
\begin{equation} \label{eq type 3 tran 4}
T_a =  T^{-1}_{a_{1}} T_{a_{2}}.
\end{equation}

\clearpage

\subsection*{Supplementary Section 6. Policy adaptation to rigid-body robots}
To further validate the versatility of our LERL framework, we transfer the  policy learned on the soft robotic segment $\text{E}^{1}_\text{S}$ to a three-segment rigid-body robot for a target reaching task. The rigid-body robot is designed in the Mujoco environment (Supplementary Fig.~\ref{fig rigid robot}), with each segment actuated by the virtual force $u_{\text{rig}} = [u_x,u_y]^{\top} \in \mathbb{R}^2$ on the plane vertical to the rigid body. The controlled states, i.e., robot tip's pose and velocity, remain consistent with those in the soft robot scenario. 
We use the action allocation strategy introduced in Supplementary Section 5 to transfer the policy $u \in \mathbb{R}^4$ learned on the honeycomb segment to each segment of the rigid-body robot, i.e.,
\begin{equation}\label{Eqn:reallocation-rigid}
  u_{\text{rig}}= 
  \begin{bmatrix}
     -1/2&-1/2 & 1/2 &  1/2\\ 
     1/2&-1/2 & -1/2 & 1/2\\ 
 \end{bmatrix} u.
\end{equation}
As shown in  Supplementary Fig.~\ref{fig rigid robot}, directly transferring the policy via the control action re-allocation strategy in equation~\eqref{Eqn:reallocation-rigid} leads to instability in the target-reaching task. 
This is because the virtual forces in the rigid-body robot differ in physical interpretation from the air voltage inputs in the honeycomb segment.
However, online policy learning via model-free RL in the shared  Koopman embedding space, pretrained on the honeycomb segment $\text{E}^{1}_\text{S}$,  enables rapid convergence to the target. This demonstrates the effectiveness of our LERL framework in fast adaptation to rigid-body robots. Future work will explore its applicability across various real-world rigid-body robotic systems.
\begin{figure*}[h!t]
    \centering
    \includegraphics[width=0.75 \textwidth]{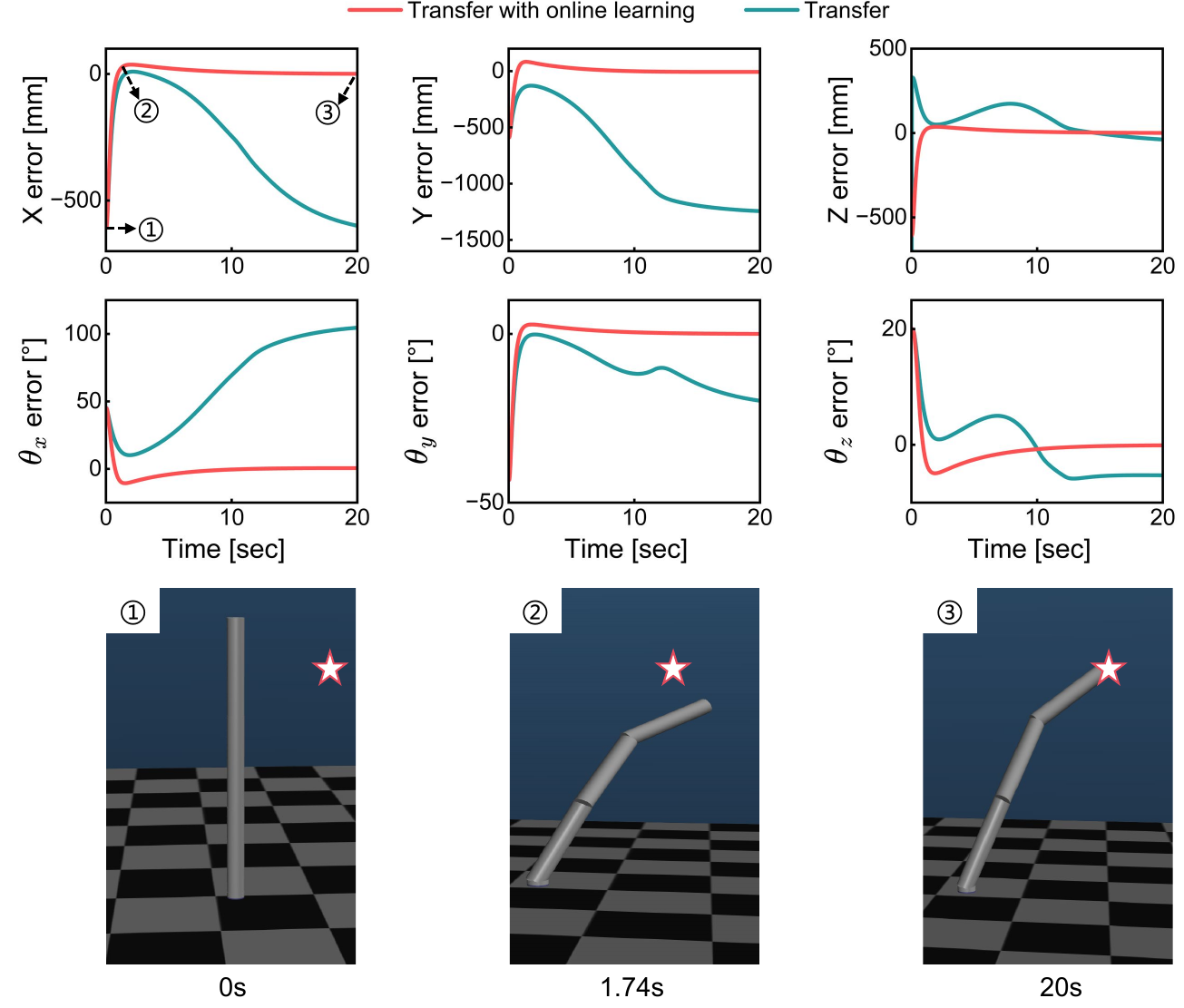}
    \caption{
    \textbf{Policy adaptation to rigid robots.} 
    Pose error curves for policy transfer with or without online learning scenarios, along with the pose of the rigid-body robot at selected moments.
    The highlighted star is the reference target of the robot tip.
    }
    \label{fig rigid robot}
\end{figure*}
\clearpage

\subsection*{Supplementary Section 7. Feedback policy initialization}
The elephant-trunk robot and the worm robot utilize different actuation modes: positive and negative pressure actuation, respectively.
Thereby, a significant gap arises between these heterogeneous robots within the same pretrained Koopman embedding when transferring the policy trained on the honeycomb segment to the worm robots. 

To bridge the aforementioned gap, we effortlessly initialize the feedback policy $u_e$ using just 10k samples $\{x,u\}$ via KCID\upcite{haggerty2023control}. Specifically, given the pretrained Koopman embedding $\Psi(\cdot)$ from the honeycomb segment, the model parameters $A$ and $B$ are first calculated via solving:
\begin{equation} \label{eq ab}
    \min_{A,B} \sum_{i=1}^{N_0}\left\| s_{i+1} - A s_{i}  - B u_i \right\|^2,
\end{equation}
where $s_{i+1} = \Psi(x_{i+1})$ and $s_{i} = \Psi(x_{i})$, $N_0 = 10k$.

Using the calculated  $A$ and $B$, an initialized feedback policy $u_{e} = K_0 s_e$ is generated by solving the standard LQR problem:
\begin{equation}\label{eq lqr}
\begin{aligned}
    \min_{u_e} & \sum_{k=0}^{+\infty} s^{\top}_{e,k} {Q_0} s_{e,k}+ u^{\top}_{e,k} R_0 u_{e,k},  \\  
\text{s.t.} \, \, \, & s_{e,k+1} = A s_{e,k} + B u_{e,k}, 
\end{aligned}
\end{equation}
where  $Q_0\in\mathbb{R}^{n_{\psi}\times n_{\psi}}$ and $R_0^{m\times m}$ are chosen as positive-definite matrices. 
\clearpage

\subsection*{Supplementary Section 8. Theoretical results}
This section theoretically demonstrates that, under mild assumptions, LERL with the least-square update rule~\eqref{Eqn:solution-least-square} outperforms a robust KCID that accounts for modelling errors. 

Assume that the reference pair $\{x_r,u_r\}$ is reachable, i.e., $x_r=f(x_r,u_r)$, where $f(\cdot, \cdot)$ is the unknown dynamics of the soft robot. The dynamic model of the state error is described as $x_e^+=f_e(x_e,u_e):=f(x,u)-f(x_r,u_r)$.  The resulting error dynamics on the embedding space is given as
\begin{equation}\label{Eqn:error_state}
    s^+_e=As_e+Bu_e+w_e(s_e,u_e).
\end{equation}

    According to optimal control theory, a stabilizing feedback policy $u_e=Ks_e$ results in a quadratic value function $V(s_e)=s_e^{\top}Ps_e$ when $w_e(s_e,u_e)=0$ holds, where $P\in\mathbb{R}^{n_{\psi}\times n_{\psi}}$ is a symmetric positive definite matrix. However, since $w_e(s_e,u_e)$ is non-negligible, the feedback policy $u_e$, applied to the model in equation~\eqref
{Eqn:linear_p-residual}, leads to a time-dependent value function $V(s_{e,k})=s_{e,k}^{\top}P_ks_{e,k}$, where $P_k$ varies with time. Hence, the so-called $\mathcal{Q}$-function (state-action value function), i.e., $\mathcal{Q}(z_k) :=l(z_k)+\gamma V(As_{e,k}+Bu_{e,k}+w_{e,k})$, can be rewritten as  
	\begin{equation}\label{Eqn:q-funation-iteration1}
  \mathcal{Q}(z_k)=z_k^{\top}H_oz_k+G(z_k):=z_k^{\top}{H(P_k)}z_k,\vspace{2mm}
	\end{equation}
	where $z_k:=[s_{e,k}^{\top} , \ u_{e,k}^{\top}]^{\top}$;
     $ H_o=\begin{bmatrix}
	 Q+\gamma A^{\top}P_kA& \gamma A^{\top}P_kB\\
	 \gamma B^{\top}P_kA& R+\gamma B^{\top}P_kB
    	\end{bmatrix}$; 
        $ H(P_k)=\begin{bmatrix}
	 H_{ss}(P_k)& H_{su}(P_k)\\
	 H_{us}(P_k)& H_{uu}(P_k)
    	\end{bmatrix}$ is a matrix on $P_k$, expressed in a block form similar to $H_o$;
the term $G(z_k)$ originates from the modelling error. Since it is directly tied to the input-state pair, we leverage $G(z_k)$ to enhance control performance, rather than treating it as an uncertainty that degrades performance\upcite{zhang2021}.

To prove the theoretical property, we introduce the following assumptions about the equivalent Koopman error model in equation~\eqref{Eqn:error_state}.
	\begin{lemma}[Equivalent Koopman error model\upcite{zhang2021}]\label{Eqn:sta-pro}\hfill \\
		(a) Model~\eqref{Eqn:error_state} is stabilizable at $s_e=0$ if and only if $f_e(x_e,u_e)$ is stabilizable. \\
		(b) The model uncertainty $w_e(s_e,u_e)=\Psi(f(\Psi^{-1}(s_e),u_e))-As_e-Bu_e$ satisfies:
		\begin{equation}\label{Eqn:lipsch-1}
		\begin{array}{lll}
		\|w_e(s_e,u_e)\|^2\leq L_s \|s_e\|^2+
		L_u \|u_e\|^2,
		\end{array}
		\end{equation}
		where $L_s$ and $L_u$ are bounded constants. 

\end{lemma}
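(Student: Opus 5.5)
The plan is to exploit the structure of the embedding in equation~\eqref{eqn:embedding-mlp}, $\Psi(x)=(Tx,\bar{\Psi}(Tx))$, where $T$ is invertible (enforced by $L_T$). The first $12$ coordinates of $s=\Psi(x)$ are $Tx$. So $\Psi$ is injective and has the explicit linear left inverse $x=T^{-1}C_0 s$ with $C_0=[I_{12}\ 0]$.

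Accordingly, I will read $\Psi^{-1}(s_e)$ in the statement as the error-state map $s_e\mapsto x_e=T^{-1}C_0 s_e$, which is well defined on the image manifold $\{\Psi(x)-\Psi(x_r)\}$. With this reading, $x_e$ and $s_e$ are in one-to-one correspondence along trajectories. I will also use two properties throughout:
\begin{itemize}
\item $\bar{\Psi}$ is a ReLU MLP, hence globally Lipschitz with some constant $L_{\bar\Psi}$; this gives $\|s_e\|\le L_\Psi\|x_e\|$ with $L_\Psi=\|T\|(1+L_{\bar\Psi})$.
\item $\|x_e\|\le\|T^{-1}\|\,\|s_e\|$.
\end{itemize}

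For part (a), I argue both directions through these two inequalities.
\begin{itemize}
\item If $u_e=\kappa(x_e)$ stabilizes $x_e^+=f_e(x_e,u_e)$ at the origin, then $u_e=\kappa(T^{-1}C_0 s_e)$ is a feedback on $s_e$. The lifted error evolves exactly as equation~\eqref{Eqn:error_state}, since $w_e$ is defined as the residual. Hence $\|s_{e,k}\|\le L_\Psi\|x_{e,k}\|\to 0$, and Lyapunov stability transfers via the same constants.
\item Conversely, a stabilizing feedback $u_e=\kappa(s_e)$ for equation~\eqref{Eqn:error_state} yields $u_e=\kappa(\Psi(x_r+x_e)-\Psi(x_r))$ for $f_e$. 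Then $\|x_{e,k}\|\le\|T^{-1}\|\|s_{e,k}\|\to0$.
\end{itemize}
The only care needed is that stabilizability of equation~\eqref{Eqn:error_state} is understood for initial conditions on the image manifold, which is invariant under the true dynamics.

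For part (b), I first note that $w_e(0,0)=0$. This holds because the reference pair is reachable, $x_r=f(x_r,u_r)$, so the lifted dynamics at the reference are consistent. Then I bound
\[
\|w_e\|\le \|\Psi(f(x,u))-\Psi(f(x_r,u_r))\|+\|A\|\|s_e\|+\|B\|\|u_e\|
\le L_\Psi L_f(\|x_e\|+\|u_e\|)+\|A\|\|s_e\|+\|B\|\|u_e\|,
\]
and substitute $\|x_e\|\le\|T^{-1}\|\|s_e\|$. Squaring with $(a+b)^2\le 2a^2+2b^2$ gives equation~\eqref{Eqn:lipsch-1} with
\[
L_s=2\big(L_\Psi L_f\|T^{-1}\|+\|A\|\big)^2,\qquad L_u=2\big(L_\Psi L_f+\|B\|\big)^2.
\]

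The main obstacle is the regularity of the unknown robot dynamics $f$. The bound requires $f$ to be Lipschitz in $(x,u)$, with constant $L_f$, at least on the compact operating region containing the reference and the trajectories. I would state this as the standing mild assumption, as in zhang2021, and restrict the constants $L_s,L_u$ to that region, so the inequality is local but uniform there. A secondary subtlety is that the inverse map is only defined on the image of $\Psi$. I would handle this by always working with $s_e$ generated from actual states, so no extension of $\Psi^{-1}$ off the manifold is needed.
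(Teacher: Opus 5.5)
Your argument is correct under the hypotheses you state. There is no proof in the paper to compare it against. The paper imports the lemma by citation, and the sentence preceding it introduces it as one of "the following assumptions about the equivalent Koopman error model". As a result, the regularity of $f$ and the invertibility of the lift are left implicit there.

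Your route makes the lemma self-contained by using the concrete embedding \eqref{eqn:embedding-mlp}. The $Tx$ block gives injectivity with the linear left inverse $T^{-1}C_0$, and the ReLU layers give a global Lipschitz constant for $\Psi$. Part (a) then reduces to the two-sided comparison $\|T^{-1}\|^{-1}\|x_e\|\le\|s_e\|\le L_\Psi\|x_e\|$ along trajectories, and part (b) reduces to a Lipschitz bound on $f$. This gives explicit constants $L_s,L_u$ in terms of $\|A\|,\|B\|,\|T\|,\|T^{-1}\|,L_{\bar\Psi},L_f$. It also isolates the one genuinely external hypothesis, which cannot be dropped: without some Lipschitz-type control of $f$ near $(x_r,u_r)$ (calmness at that single point suffices), no finite $L_s,L_u$ can exist.

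Three small tightenings. First, $L_T$ only pulls the eigenvalues of $T$ toward $1$; it does not enforce nonsingularity. State invertibility of $T$ as a hypothesis, as the paper implicitly does when it uses $T^{-1}$ in $L_r$.

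Second, you announce only a reinterpretation of $\Psi^{-1}$, but your bound in (b) correctly reads the whole expression in error coordinates. That is, it uses $\Psi_e(f_e(\Psi_e^{-1}(s_e),u_e))$ with $\Psi_e(x_e)=\Psi(x_r+x_e)-\Psi(x_r)$, so that $w_e$ equals the residual $w(s,u)-w(s_r,u_r)$ from the Methods. Say this explicitly, since the literal $\Psi(f(x_e,u_e))$ is not the right object in general.

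Third, $T^{-1}C_0$ is defined on all of $\mathbb{R}^{n_\psi}$, so $w_e$ extends to every $s_e$ and your estimate in (b) holds there verbatim. One step of the lifted map then lands on the image manifold, so restricting (a) to on-manifold initial conditions is not actually needed.
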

\begin{assumption}\label{assum-lyap}
	There exists an $\epsilon>0$ such that the following Lyapunov equation has a solution on $\bar P\in\mathbb{R}^{n_{\psi}\times n_{\psi}}$ and  $\bar K\in\mathbb{R}^{m\times n_{\psi}}$, i.e.,
	\begin{equation}\label{Eqn-lyapunov-equa}
	\begin{array}{ll}
	\bar P=&\tilde Q+\bar K^{\top}\tilde R\bar K+\gamma F^{\top}\tilde P F,
	\end{array}
	\end{equation}
	where $\tilde P=\bar P+\epsilon \bar P^2$, $\tilde Q=Q+\gamma (1+\epsilon)/\epsilon L_s$, $\tilde R= R+\gamma (1+\epsilon)/\epsilon L_u$. 
	\begin{remark}
An optimal solution can be calculated by setting $\partial \bar P/\partial K=0$, which leads to: 
\begin{equation}\label{Eqn:policy-value}
\bar K=-\gamma T(\bar P)^{-1}B^{\top}\tilde PA,
\end{equation}
where $T(P)=\tilde R+\gamma B^{\top}\tilde PB$.
Substituting equation~\eqref{Eqn:policy-value} to the right-hand side of equation~\eqref{Eqn-lyapunov-equa}, the result follows:
\begin{equation}\label{Eqn:riccati}
\bar P=\tilde Q-\gamma^2A^{\top}\tilde PBT(P)^{-1}B^{\top}\tilde PA+\gamma A^{\top}\tilde PA,
\end{equation}
which is the variant of the so-called Riccati equation.
	\end{remark}
		\begin{proposition}
		The Hessian matrix of $G(z_k)$, i.e., $\Delta G(z_k)$, is bounded as 
		\begin{equation}\label{Eqn:hessian}
		\Delta G(z_k)\leq Y(P_k),
		\end{equation} 
		where 
		\begin{equation*}
		\begin{array}{l}
		Y(P_k)=
		\begin{bmatrix}\gamma \frac{1+\epsilon}{\epsilon}L_s+\epsilon \gamma A^{\top}P_k^2A&\gamma\epsilon A^{\top}P_k^2B\\
		\gamma \epsilon B^{\top}P_k^2A& \gamma (\frac{1}{\epsilon}+1)L_u+\gamma\epsilon B^{\top}P_k^2B\end{bmatrix}.
		\end{array}
		\end{equation*}
		Moreover, if $s$ spans an invariant Koopman subspace for the dynamic model of soft robots, then $G(z)=0$.
	\end{proposition}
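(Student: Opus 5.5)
Write $G(z_k)=\mathcal{Q}(z_k)-z_k^{\top}H_oz_k$ explicitly, using $V(s)=s^{\top}P_ks$. With $\bar s_k:=As_{e,k}+Bu_{e,k}$ and $w_k:=w_e(s_{e,k},u_{e,k})$,
\[
\gamma V(\bar s_k+w_k)=\gamma \bar s_k^{\top}P_k\bar s_k+2\gamma \bar s_k^{\top}P_kw_k+\gamma w_k^{\top}P_kw_k .
\]
The first term together with $l(z_k)$ is exactly $z_k^{\top}H_oz_k$. Hence
\[
G(z_k)=2\gamma \bar s_k^{\top}P_kw_k+\gamma w_k^{\top}P_kw_k .
\]
The plan is to bound this quantity by a quadratic form $z_k^{\top}Y(P_k)z_k$. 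I will then read off the matrix bound on its second-order part, which is the sense in which $\Delta G\le Y(P_k)$ will be interpreted: the quadratic coefficient matrix/Hessian of the dominating quadratic form, viewing $G$ as $z^{\top}(H(P_k)-H_o)z$.

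The key step is a Young-type inequality on the cross term with parameter $\epsilon>0$:
\[
2\bar s^{\top}P_kw\le \epsilon\,\bar s^{\top}P_k^2\bar s+\tfrac{1}{\epsilon}\|w\|^2 .
\]
For the pure residual term I would use $w^{\top}P_kw\le \|w\|^2$ after normalizing, or more carefully absorb $P_k$ so that the coefficient $(1+\tfrac1\epsilon)$ appears. This mirrors the structure $\tilde P=\bar P+\epsilon\bar P^2$ and the factors $(1+\epsilon)/\epsilon$ in Assumption~\ref{assum-lyap}; I expect to write $w^{\top}P_kw$ and the cross term jointly so the combined coefficient on $\|w\|^2$ is $\gamma(1+\epsilon)/\epsilon$. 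Then Lemma~\ref{Eqn:sta-pro}(b) gives $\|w_k\|^2\le L_s\|s_{e,k}\|^2+L_u\|u_{e,k}\|^2$. Expanding $\epsilon\gamma\,\bar s^{\top}P_k^2\bar s=\epsilon\gamma\, z^{\top}[A\ B]^{\top}P_k^2[A\ B]z$ produces exactly the blocks $\epsilon\gamma A^{\top}P_k^2A$, $\epsilon\gamma A^{\top}P_k^2B$ and $\epsilon\gamma B^{\top}P_k^2B$. The residual term supplies the diagonal $\gamma\frac{1+\epsilon}{\epsilon}L_s$ and $\gamma(\frac1\epsilon+1)L_u$. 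Altogether this yields $G(z_k)\le z_k^{\top}Y(P_k)z_k$ and hence the claimed Hessian bound.

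The main obstacle is the bookkeeping of the constant in front of $\|w\|^2$, which must come out as $(1+\epsilon)/\epsilon$ times $\gamma$. Doing so requires a scaling convention: either assume $P_k\preceq I$ after normalization, or treat $L_s,L_u$ as already absorbing $\|P_k\|$. I will state this explicitly and choose the latter if needed.

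For the final claim, if $s=\Psi(x)$ spans a Koopman-invariant subspace, then $\Psi(f(x,u))$ lies exactly in the span, so $s^+=As+Bu$ holds exactly. Consequently $w\equiv0$, hence $w_e\equiv0$, and the expression for $G$ above vanishes identically.
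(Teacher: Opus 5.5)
Your proposal is correct and is essentially the paper's proof. Both expand $\gamma(\bar s_k+w_k)^{\top}P_k(\bar s_k+w_k)$, bound the cross term by $2\bar s^{\top}P_kw\le\epsilon\,\bar s^{\top}P_k^2\bar s+\epsilon^{-1}\|w\|^2$, insert $\|w_e\|^2\le L_s\|s_e\|^2+L_u\|u_e\|^2$, read $G(z_k)\le z_k^{\top}Y(P_k)z_k$ as the Hessian bound, and use $w\equiv0$ on an invariant subspace. The only difference is that the paper works in closed loop with $F=A+BK$ and $L=L_s+L_uK^{\top}K$, while you work directly with $\bar s_k=[A\ B]z_k$, which shows the blocks of $Y(P_k)$ more transparently.

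The paper silently uses $w^{\top}P_kw\le\|w\|^2$ to get the coefficient $\gamma(1+\epsilon)/\epsilon$, so stating the convention $P_k\preceq I$ explicitly, as you propose, is the right call. No joint handling of the cross and residual terms can remove it: with $\bar s$ and $w$ treated independently, $2\bar s^{\top}P_kw+w^{\top}P_kw\le\epsilon\,\bar s^{\top}P_k^2\bar s+\tfrac{1+\epsilon}{\epsilon}\|w\|^2$ holds exactly when $P_k\preceq I$.
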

	\textbf{Proof}. 	Note that one can rewrite equation~\eqref{Eqn:q-funation-iteration} as
	\begin{equation}\label{Eqn:value-i}
	\begin{array}{ll}
	\mathcal{Q}(z_k)
	&=s_{e,k}^{\top}Qs_{e,k}+s_{e,k}^{\top}K^{\top}RKs_{e,k}+\gamma(Fs_{e,k}+w_{e,k})^{\top}P_k(Fs_{e,k}+w_{e,k})\\
	&\leq s_{e,k}^{\top}Qs_{e,k}+s_{e,k}^{\top}K^{\top}RKs_{e,k}+\gamma s_{e,k}^{\top}F^{\top}(P_k+\epsilon P_k^2)Fs_{e,k}+
	\gamma\frac{1+\epsilon}{\epsilon}s_{e,k}^{\top} Ls_{e,k},
	\end{array}
	\end{equation}
where the last inequality is due to the following fact:
	Given any constant $\epsilon>0$, it holds that
	\begin{equation*}
	2s_{e,k}^{\top}F^{\top}Pw_{e,k}\leq\epsilon  s_{e,k}^{\top}F^{\top}P^2Fs_{e,k}+\frac{1}{\epsilon}s_{e,k}^{\top}Ls_{e,k},
	\end{equation*}
	where $L=L_s+L_uK^{\top}K$.
	In view of equation~\eqref{Eqn:value-i}, one has:
	\begin{equation*}
	G(z_k)\leq z_k^{\top}Y_k z_k,
	\end{equation*} 
	i.e., the condition in equation~\eqref{Eqn:hessian} holds. Moreover, if $s$ spans an invariant Koopman subspace for the dynamic model for soft robots, the uncertainty in equation~\eqref{Eqn:linear_p-residual} holds $w=0$ and it leads to $\Delta G(z)=0$. \hfill$\square$
\end{assumption}

To state the following Theorems~\ref{theorem 1} and \ref{theo:asymp-stability} in a compact form, let $\bar P^{\ast}$ and $\bar K^{\ast}$ be the optimal solution of equation~\eqref{Eqn-lyapunov-equa} and denote $\bar H_i(\bar P_i)=\Xi_i(\bar P_i)+ Y_i(\bar P_i)$, where $\bar P_i$ is calculated using a value iteration procedure as follows with $\bar P_0=0$.
\begin{enumerate}
	\item Value update:
	\begin{subequations}\label{Eqn:value-iteration}
		\begin{align}\label{Eqn-value-update}
		\bar P_{i+1}=\tilde Q-\gamma^2A^{\top}\tilde P_iBT(\bar P_i)^{-1}B^{\top}\tilde P_iA+\gamma A^{\top}\tilde P_iA.
		\end{align}
		\item Policy update:
		\begin{align}\label{Eqn:policy-i}
		\bar K_{i+1}=-\gamma T(\bar P_i)^{-1}B^{\top}\tilde P_iA.
		\end{align}
	\end{subequations}
\end{enumerate}
	\begin{theorem} \label{theorem 1}
 Under Assumption~\ref{assum-lyap} and $ZZ^{\top}$ is full rank, given $\hat H_0=\bar H_0(P_0)$ and $K_0=\bar K_0$, the iteration of $\hat H_{i}$, using the least-square update rule~\eqref{Eqn:solution-least-square}, satisfies $\hat H_{i}\leq\bar  H_{i}(\bar P_i)$ for any $i\in\mathbb{N}$. Moreover,   $\hat H_{\infty}\leq\bar  H(\bar P^{\ast}):=\bar H^{\ast}$.
	\end{theorem}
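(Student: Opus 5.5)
The plan is an induction on the iteration index $i$. The key tool is a comparison principle: the least-square update~\eqref{Eqn:solution-least-square} is the sample-based version of a Bellman operator acting on $\hat H_i$, while the value iteration~\eqref{Eqn:value-iteration} is the same Bellman operator with the uncertain term $G$ replaced by its upper bound $Y$ from the Proposition on the Hessian~\eqref{Eqn:hessian}.

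\textbf{Step 1: the exact fixed point of the least-square step.} Since $ZZ^{\top}$ is full rank, the minimizer of~\eqref{Eqn:target-reach} is unique. The regression targets $d(z_j,\hat H_i)=l(z_j)+\gamma\|z_{j}^+\|^2_{\hat H_i}$ are generated by the true error dynamics~\eqref{Eqn:error_state}, and the next action is the greedy policy~\eqref{Eqn:u-optimal_o}. Hence $z^{\top}\hat H_{i+1}z$ coincides on the data with
\[
l(z)+\gamma\min_{u^+}\|[s_e^{+};u^+]\|^2_{\hat H_i}=l(z)+\gamma\, (s_e^{+})^{\top}P(\hat H_i)s_e^{+},
\]
where $P(\hat H_i)=\hat H_{ss}-\hat H_{su}\hat H_{uu}^{-1}\hat H_{us}$ is the Schur complement. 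Substituting $s_e^+=As_e+Bu_e+w_e$ and bounding the cross term with the Young inequality from~\eqref{Eqn:value-i}, together with~\eqref{Eqn:lipsch-1}, gives
\[
z^{\top}\hat H_{i+1}z\le z^{\top}\big(\Xi(P(\hat H_i))+Y(P(\hat H_i))\big)z .
\]
Here $\Xi(P)$ denotes the nominal block matrix $H_o$ with $P_k$ replaced by $P$. Full rank of $ZZ^{\top}$, i.e.\ persistent excitation, is what lets me pass from the inequality on the data to a matrix inequality. This identification is the delicate point, since a quadratic form is only pinned down on the span of the vectors ${\bm z}_j$. I would argue that the residual $z^{\top}\hat H_{i+1}z-d$ is the image of a sign-definite quantity under the least-squares projection, and use full rank to get uniqueness.

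\textbf{Step 2: monotonicity.} Both maps $P\mapsto\Xi(P)+Y(P)$ and $H\mapsto P(H)$ are monotone in the Loewner order. For the first, note that $P+\epsilon P^2$ is monotone on the positive semidefinite cone when the iterates commute or are suitably bounded, which I would handle by working with $\tilde P$ as in Assumption~\ref{assum-lyap}. For the second, the Schur complement is the minimum over $u$ of a quadratic form, and the minimum preserves order.

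\textbf{Step 3: induction.} The base case is $\hat H_0=\bar H_0(\bar P_0)$. If $\hat H_i\le\bar H_i(\bar P_i)$, then $P(\hat H_i)\le P(\bar H_i)$. By construction of~\eqref{Eqn-value-update}, the Schur complement of $\bar H_i$ equals $\bar P_{i+1}$, because minimizing $\Xi+Y$ over $u$ reproduces the Riccati-type update with $\tilde Q$, $\tilde R$ and $\tilde P$. Combining this with Steps 1 and 2 gives $\hat H_{i+1}\le\bar H_{i+1}(\bar P_{i+1})$.

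\textbf{Step 4: the limit.} Value iteration started from $\bar P_0=0$ is monotonically nondecreasing and is bounded above by the stabilizing solution guaranteed by Assumption~\ref{assum-lyap}. It therefore converges to $\bar P^\ast$, the solution of~\eqref{Eqn:riccati}. Passing to the limit in $\hat H_i\le\bar H_i(\bar P_i)$, along subsequences if needed, gives $\hat H_\infty\le\bar H^\ast$.

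I expect the main obstacle to be Step 1 combined with the monotonicity of $P\mapsto\epsilon P^2$, since the latter fails in general for noncommuting matrices. If it does, I would first try to replace $\epsilon P^2$ by $\epsilon\lambda_{\max}(P)P$ or by a fixed upper bound $\epsilon\bar P^{\ast}P$. Either choice restores monotonicity while keeping the same structure as Assumption~\ref{assum-lyap}.
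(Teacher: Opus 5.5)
Your outline follows the paper's route. You bound each regression target by $z^\top(\Xi+Y)z$ via the Young/Lipschitz estimate and push this through the least-squares solve using full rank of $ZZ^\top$. You then compare inductively with the robust value iteration; your Step 3 identity is correct, since the Schur complement of $\Xi(\bar P_i)+Y(\bar P_i)$ is exactly the right-hand side of \eqref{Eqn-value-update}. Finally you pass to the limit via $\bar P_i\uparrow\bar P^\ast$. The gap is the mechanism you propose for Step 1, which fails.

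First, $z^\top\hat H_{i+1}z$ does not coincide with the targets on the data. With $N>q(q+1)/2$ samples and $G\neq0$, the $d_j$ are in general not values of a single quadratic form, so the fit does not interpolate them.

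Second, and more seriously, let $U=\Xi(P(\hat H_i))+Y(P(\hat H_i))$ and $e_j=z_j^\top Uz_j-d_j\ge0$. Then $\hat H_{i+1}=U-E$ with $\mathrm{vec}(E)=(ZZ^\top)^{-1}Ze$, i.e. $E$ is the least-squares quadratic fit of nonnegative numbers, and such a fit need not be positive semidefinite. Take $z\in\mathbb{R}^2$, sample points $(1,0),(0,1),(1,1),(1,-1)$ (so $ZZ^\top$ is invertible) and $e=(0,0,1,0)$. This gives $E=\begin{bmatrix}1/5&1/4\\1/4&1/5\end{bmatrix}$, which has eigenvalue $-1/20$ and takes the value $-1/10$ at the sample $(1,-1)$. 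So the fit even exceeds $U$ at a data point. Full rank gives uniqueness, not positivity. Even exact interpolation would only give finitely many inequalities $z_j^\top(U-\hat H_{i+1})z_j\ge0$, which never imply $U-\hat H_{i+1}\succeq0$.

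The paper's step $h_{i+1}\le(ZZ^\top)^{-1}(ZZ^\top)v(\Xi_i+Y_i)$ in \eqref{Eqn:regression} is valid only if the target vector equals $Z^\top v(H)$ for one matrix $H\preceq\Xi+Y$. That requires two things:
\begin{itemize}
\item the exact quadratic representation $\mathcal Q(z_k)=z_k^\top H(P_k)z_k$ of \eqref{Eqn:q-funation-iteration1}, with $P_k$ constant over the data window;
\item the Hessian (Loewner) bound \eqref{Eqn:hessian}.
\end{itemize}
Under these, the solve is exact recovery rather than a projection. You would need to adopt this structure explicitly as a hypothesis. The pointwise estimate $G(z)\le z^\top Yz$ that your Young argument produces is too weak to close the step.

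Step 2 has a second real problem. The Schur-complement half is fine, but $P\mapsto P+\epsilon P^2$ is not Loewner monotone for any $\epsilon>0$, however small the matrices. Take $P_1=t\begin{bmatrix}1&0\\0&0\end{bmatrix}\preceq P_2=t\begin{bmatrix}2&1\\1&1\end{bmatrix}$; then $\det\big[(P_2+\epsilon P_2^2)-(P_1+\epsilon P_1^2)\big]=-\epsilon^2t^4<0$. So ``suitably bounded'' does not rescue it.

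Your fallback of replacing $\epsilon P^2$ by the monotone majorant $\epsilon\lambda_{\max}(P)P$ is sound. However, it changes $\tilde P$, $Y$, Assumption~\ref{assum-lyap} and the comparison sequence $\bar H_i(\bar P_i)$, so it proves a modified theorem rather than the stated one. The other option, $\epsilon\bar P^\ast P$, is not even symmetric.

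The same monotonicity is what makes the value iteration in Step 4 nondecreasing and bounded by $\bar P^\ast$, so Step 4 inherits the problem. The paper also uses this monotonicity tacitly: in the last inequality of \eqref{Eqn:regression}, and in claims i)--ii), which are cited from the standard Riccati setting where $P$ enters linearly. Following its proof will not fill this gap.
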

	\textbf{Proof.}
	One first rewrites:
	\begin{equation}\label{Eqn:dz-vec}
	\begin{array}{ll}
	d(z_{i,k},\hat H_i)&=l(z_{i,k})+z_{i,k+1}^{\top}\gamma\hat H_iz_{i,k+1}\\
	&\leq{\bm z}_k^{\top}v\left({\Xi}_i(P_{k,i})+ Y_i(P_{k,i})\right),
	\end{array}
	\end{equation}
	where 	
	$ P_i=\begin{bmatrix}
	I&K_i^{\top}
	\end{bmatrix}\hat H_i\begin{bmatrix}
	I&K_i^{\top}
	\end{bmatrix}^{\top}$. Note that the computation of $\bar H_{i}(\bar P_i)$ via the iteration on $\bar P_i$ with equation~\eqref{Eqn-value-update} is equivalent to the iteration on $ H_{i+1}$ under $H_{0}=\bar H_{0}(\bar P_0)$.
	
	Hence, by using the Kronecker products, from equation~\eqref{Eqn:solution-least-square} and equation \eqref{Eqn:dz-vec}, one computes: 
	\begin{equation}\label{Eqn:regression}
	\begin{array}{ll}
	  h_{i+1}&\leq(ZZ^{\top})^{-1}(ZZ^{\top})v\left(\Xi_i(P_{k,i})+ Y_i(P_{k,i})\right)\\
	 &=v\left(\Xi_i(P_{k,i})+ Y_i(P_{k,i})\right)\\ &\leq v\left(\Xi_i(\bar P_{i})+ Y_i(\bar P_{i})\right),
	\end{array}  
	\end{equation}
	where $\bar h={\rm vec}(\bar H)$, the last equality holds since $\hat H_0=\bar H_0$ and $K_0=\bar K_0$.

	In view of the last inequality in equation~\eqref{Eqn:regression}, it holds that: 
	\begin{equation}\label{Eqn: better H}
	\hat H_{i+1}\leq \bar  H_{i+1}(\bar P_{i+1}).
	\end{equation}
	Similar to the proof argument in\upcite{lancaster1995algebraic}, the convergence property of the value iteration procedure as in equation~\eqref{Eqn:value-iteration} can be stated, that is:\\
			 i)$\bar P_1\leq \bar P_2\leq \bar P^{\ast}$; \\
			ii) $\lim_{i\rightarrow +\infty}\bar P_i=\bar P^{\ast}$, $\lim_{i\rightarrow+\infty}\bar K_i=\bar K^{\ast}$. \\
	Hence, $\hat H_{\infty}\leq\bar  H_{\infty}(\bar P_{\infty})=\hat{H}_{\infty}(\bar P_{\infty})+ Y_{\infty}(\bar P_{\infty})=\bar  H^{\ast}$.	
	\hfill $\square$

    \begin{remark}
        Equation~\eqref{Eqn: better H} in Theorem~\ref{theorem 1} shows that the overall cost-to-go with the least-square update rule~\eqref{Eqn:solution-least-square} is smaller than that using the robust value iteration with the procedure~\eqref{Eqn:value-iteration}. The latter results in a robust LQR as iteration $i$ goes to infinity. Hence, LERL with the least-square update rule~\eqref{Eqn:solution-least-square} outperforms the robust KCID with the procedure~\eqref{Eqn:value-iteration}.
    \end{remark}
	
\begin{theorem}\label{theo:asymp-stability}
	Let $\bar s_{e,k}=\gamma^{k/2}s_{e,k}$, then the control input computed by :
	\begin{equation}\label{Eqn:u-converge}
	\bar u_{e,k}=-(\hat H_{uu,\infty})^{-1}\hat H_{us,\infty}\bar s_{e,k},
	\end{equation} stabilizes $\bar s_{e,k}$.
\end{theorem}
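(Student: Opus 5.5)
The plan is a Lyapunov argument for the discounted (scaled) error system. The Lyapunov candidate is $\hat V(s_e)=s_e^{\top}\hat P_\infty s_e$, where
\[
\hat P_\infty=\begin{bmatrix} I & \hat K_\infty^{\top}\end{bmatrix}\hat H_\infty\begin{bmatrix} I & \hat K_\infty^{\top}\end{bmatrix}^{\top},\qquad \hat K_\infty=-(\hat H_{uu,\infty})^{-1}\hat H_{us,\infty}.
\]
This is the value the limiting $\mathcal{Q}$-function assigns to the greedy policy. Because $\hat K_\infty$ minimizes $z^{\top}\hat H_\infty z$ over $u_e$, we get $\hat V(s_e)=\min_{u_e}\hat{\mathcal{Q}}(z)$.

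First, I show that $\hat P_\infty$ is positive definite. The lower block $Q$ of $\hat H_\infty$ is positive definite, since every $d(\cdot)$ contains $l(z)\ge z^{\top}\mathrm{diag}(Q,R)z$. Together with the least-squares fixed point, this gives $\hat P_\infty\succeq Q\succ0$.

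Second, I use the fixed-point identity of the iteration at $i=\infty$. When $ZZ^{\top}$ has full rank, \eqref{Eqn:solution-least-square} reproduces the temporal-difference targets exactly in the limit. Along closed-loop data this gives
\[
\hat V(s_{e,k})=l(s_{e,k},\hat K_\infty s_{e,k})+\gamma\,\hat V(s_{e,k+1}).
\]
Here $s_{e,k+1}$ is the true next state, so the residual $w_e$ is already contained in the data. This is the point made after \eqref{Eqn:u-optimal_o}: $G(z)$ is absorbed into $\hat H$.

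Third, I substitute $\bar s_{e,k}=\gamma^{k/2}s_{e,k}$ and $\bar u_{e,k}=\gamma^{k/2}u_{e,k}$. Since $\hat V$ is quadratic, $\gamma^{k+1}\hat V(s_{e,k+1})=\hat V(\bar s_{e,k+1})$. Multiplying the identity by $\gamma^k$ then gives
\[
\hat V(\bar s_{e,k+1})-\hat V(\bar s_{e,k})=-\bar s_{e,k}^{\top}(Q+\hat K_\infty^{\top}R\hat K_\infty)\bar s_{e,k}\le-\lambda_{\min}(Q)\|\bar s_{e,k}\|^2.
\]
With $\hat V$ sandwiched between $\lambda_{\min}(\hat P_\infty)\|\cdot\|^2$ and $\lambda_{\max}(\hat P_\infty)\|\cdot\|^2$, this yields exponential decay of $\bar s_{e,k}$.

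Where the fixed point is only an inequality, I fall back on Theorem~\ref{theorem 1}. It gives $\hat H_\infty\le\bar H^\ast$, so $\hat P_\infty$ is bounded above by $\bar P^\ast$ from Assumption~\ref{assum-lyap}. Then the Lipschitz bound \eqref{Eqn:lipsch-1}, combined with the $\epsilon$-splitting inequality used in the proof of the Proposition, controls the cross term $2\bar s^{\top}F^{\top}\hat P w_e$. Here $\tilde Q=Q+\gamma\frac{1+\epsilon}{\epsilon}L_s$ and $\tilde R$ are exactly what make the robust Lyapunov equation \eqref{Eqn-lyapunov-equa} dominate the perturbed decrease. The result is a strict decrease with rate $\lambda_{\min}(Q)/\lambda_{\max}(\bar P^\ast)$.

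The main obstacle is justifying the exact Bellman fixed-point identity for the true nonlinear residual dynamics. The $\mathcal{Q}$-function is only quadratic up to $G(z)$, and $P_k$ is time-varying, so the least-squares limit matches $d$ only in a projected or averaged sense. I would try to handle this first by replacing equality with the inequality $\hat V(s)\ge l+\gamma\hat V(s^+)$. That inequality follows from $h_{i+1}\le v(\Xi_i+Y_i)$ in the proof of Theorem~\ref{theorem 1} together with monotone convergence, and it still suffices for the decrease argument above.
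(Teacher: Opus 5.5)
\textbf{There is a genuine gap: the one step that carries the decrease is not available, and neither fallback repairs it.} Your rescaling algebra in the third step is correct. The problem is its input. The decrease needs the exact Bellman identity $\hat V(s_{e,k})=l(s_{e,k},\hat K_\infty s_{e,k})+\gamma\hat V(s_{e,k+1})$ at every state of the closed loop, and this does not hold under the theorem's hypotheses.

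Full rank of $Z_kZ_k^{\top}$ makes \eqref{Eqn:solution-least-square} well defined, not exact. With $N>q(q+1)/2$ samples it projects the targets $d$ onto quadratic forms. Its limit satisfies the Bellman equation pointwise, including at future states outside the data window, only when the targets are themselves generated by one time-invariant quadratic form. Supplementary Section 8 notes that with non-negligible $w_e$ the true value is $s_{e,k}^{\top}P_ks_{e,k}$ with time-varying $P_k$, plus the extra term $G$. Exactness is guaranteed only in the invariant-subspace case $w=0$.

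\textbf{Why your fallbacks fail.}
\begin{enumerate}
\item The inequality $\hat V(s)\ge l+\gamma\hat V(s^+)$ does not follow from $h_{i+1}\le v(\Xi_i+Y_i)$. That bound caps the learned matrix from above by the robust one. It can therefore only bound the right-hand side $l+\gamma\hat V(s^+)$ from above by $l+\gamma\bar V^\ast(s^+)$; it never bounds $\hat V(s)$ from below.
\item Monotone growth of the iterates from $\bar P_0=0$ gives $V_i\le l+\gamma V_i^{+}$, which is the opposite inequality.
\item The $\epsilon$-splitting fallback along $A+B\hat K_\infty$, combined with $\hat P_\infty\preceq\bar P^\ast$, is not supported by Assumption~\ref{assum-lyap}. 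That assumption supplies \eqref{Eqn-lyapunov-equa} only for the pair $(\bar P^\ast,\bar K^\ast)$. For another gain $K$, the same estimate on $\bar P^\ast$ leaves the extra term $(K-\bar K^\ast)^{\top}T(\bar P^\ast)(K-\bar K^\ast)$, with $T(\cdot)$ as defined after \eqref{Eqn:policy-value}. This term must be dominated by $Q+K^{\top}RK$, and the one-sided ordering $\hat H_\infty\preceq\bar H^\ast$ gives no control over $\hat K_\infty-\bar K^\ast$.
\item For $\hat P_\infty$ itself there is no Lyapunov equation except the Bellman identity you are trying to avoid.
\end{enumerate}

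\textbf{Your first step has the same weakness.} Least squares does not preserve $\hat H\succeq\mathrm{diag}(Q,R)$. By linearity, $h_{i+1}=\mathrm{vec}(\mathrm{diag}(Q,R))+(Z_kZ_k^{\top})^{-1}Z_kM'$ with $M'\ge 0$ entrywise, and the second term can be indefinite. Even an exact fixed point does not give positivity. Take $s^+=2s+u$ with $Q=R=1$ and $\gamma=1$. The Riccati root $P=2-\sqrt5<0$ is a Bellman fixed point with $H_{uu}=3-\sqrt5>0$, yet its greedy gain gives the unstable closed loop $(3+\sqrt5)/2$. In the exact linear case, what selects the stabilizing root is the monotone iteration from $\bar P_0=0$, and least squares with a non-quadratic residual does not inherit that. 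So positive definiteness of $\hat P_\infty$ is essentially the conclusion, not a preliminary.

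\textbf{How the paper argues instead.} The paper never asks for a Bellman identity for $\hat H_\infty$. It proves $\gamma\|F\bar s_{e,k}+\bar w_{e,k}\|^2_{\bar P^\ast}-\|\bar s_{e,k}\|^2_{\bar P^\ast}\le-\|\bar s_{e,k}\|^2_{Q+\bar K^{\ast\top}R\bar K^\ast}$ for the robust pair, with $F=A+B\bar K^\ast$. This is exactly where $\tilde Q$, $\tilde R$, $\tilde P$ and the $\epsilon$-splitting close. It then invokes Theorem~\ref{theorem 1} to get $L(s_{e,k})\le\bar L(s_{e,k})$.

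That last comparison is only an ordering of values, not a decrease along the $\hat K_\infty$ loop. So passing from $\bar K^\ast$ to $\hat K_\infty$ is the delicate point in both arguments. Your route would additionally need either a genuine supersolution property of $\hat H_\infty$ on the true dynamics, or a quantitative bound on $\hat K_\infty-\bar K^\ast$ of the kind described above.
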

\textbf{Poof}.
Consider two Lyapunov functions:
\begin{subequations}
	\begin{align}
	\bar L(s_{e,k})=\|s_{e,k}\|_{\bar P^{\ast}}^2,
	\end{align}
	under $\bar u_{e,k}=\bar K \bar s_{e,k}$, and 
	\begin{align}
	L(s_{e,k})=\|(s_{e,k},\bar u_{e,k})\|_{\hat H_{\infty}}^2,
	\end{align}
	under control equation~\eqref{Eqn:u-converge}.
\end{subequations}
We first show the convergence under the Lyapunov function $\bar L(s_{e,k})$ with $\bar u=\bar K\bar s_{e,k}$. To this end, one writes:
\begin{equation}\label{Eqn:lyp}
L(s_{e,k+1})-L(s_{e,k})=\|s_{e,k+1}\|_P^2-\|s_{e,k}\|_P^2.
\end{equation} 
Consider $\bar s_{e,k}=\gamma^{k/2}s_{e,k}$. In view of equation~\eqref{Eqn:linear_p-residual}, one can write:
\begin{equation}\label{Eqn:bar s}
\begin{array}{ll}
\bar s_{e,k+1}&=\gamma^{1/2}(A\bar s_{e,k}+B\bar u+\bar w_{e,k})\\
&=\gamma^{1/2}(F\bar s_{e,k}+\bar w_{e,k}),
\end{array}
\end{equation}
where the second equality is due to the definition $\bar u_{e,k}=\bar K^{\ast}\bar s_{e,k}$.
Taking equation~\eqref{Eqn:bar s} into equation~\eqref{Eqn:lyp} leads to:
\begin{equation}\label{Eqn:lyp-exten}
\begin{array}{ll}
L(\bar s_{e,k+1})-L(\bar s_{e,k})&=\gamma\|F\bar s_{e,k}+\bar w_{e,k}\|_P^2-\|\bar s_{e,k}\|_P^2\\
&\leq \|\bar s_{e,k}\|_{\gamma (1+\epsilon)/\epsilon L+\gamma F^{\top}(P+\epsilon P^2)F-P}^2\\
&= -\|\bar s_{e,k}\|_{\bar Q+K^{\top}RK}^2\\
&<0.
\end{array}
\end{equation} 
Hence, the state $\bar s_{e,k}$ converges to the origin asymptotically under $\bar u_{e,k}=\bar K^{\ast}\bar s_{e,k}$.

In view of the results that $\hat H_{\infty}\leq \bar H^{\ast}$ and also  note that $\bar L(s_{e,k})=\|z_k\|_{\bar H^{\ast}}^2$, we get:
\begin{equation*}
 L(s_{e,k})\leq  \bar L(s_{e,k}),\ \forall \, k\in\mathbb{N}.
\end{equation*}
 \hfill $\square$
\begin{remark}
	Theorem~\ref{theo:asymp-stability} implies that under the control policy in equation~\eqref{Eqn:u-converge},  the state $s_{e,k}$ converges to the origin asymptotically when $\gamma$ is close to 1.
\end{remark}
\clearpage

\clearpage

\subsection*{Supplementary Table 1. Hyperparameters}

\begin{figure*}[ht!]
    \centering\includegraphics[width=0.7\textwidth]{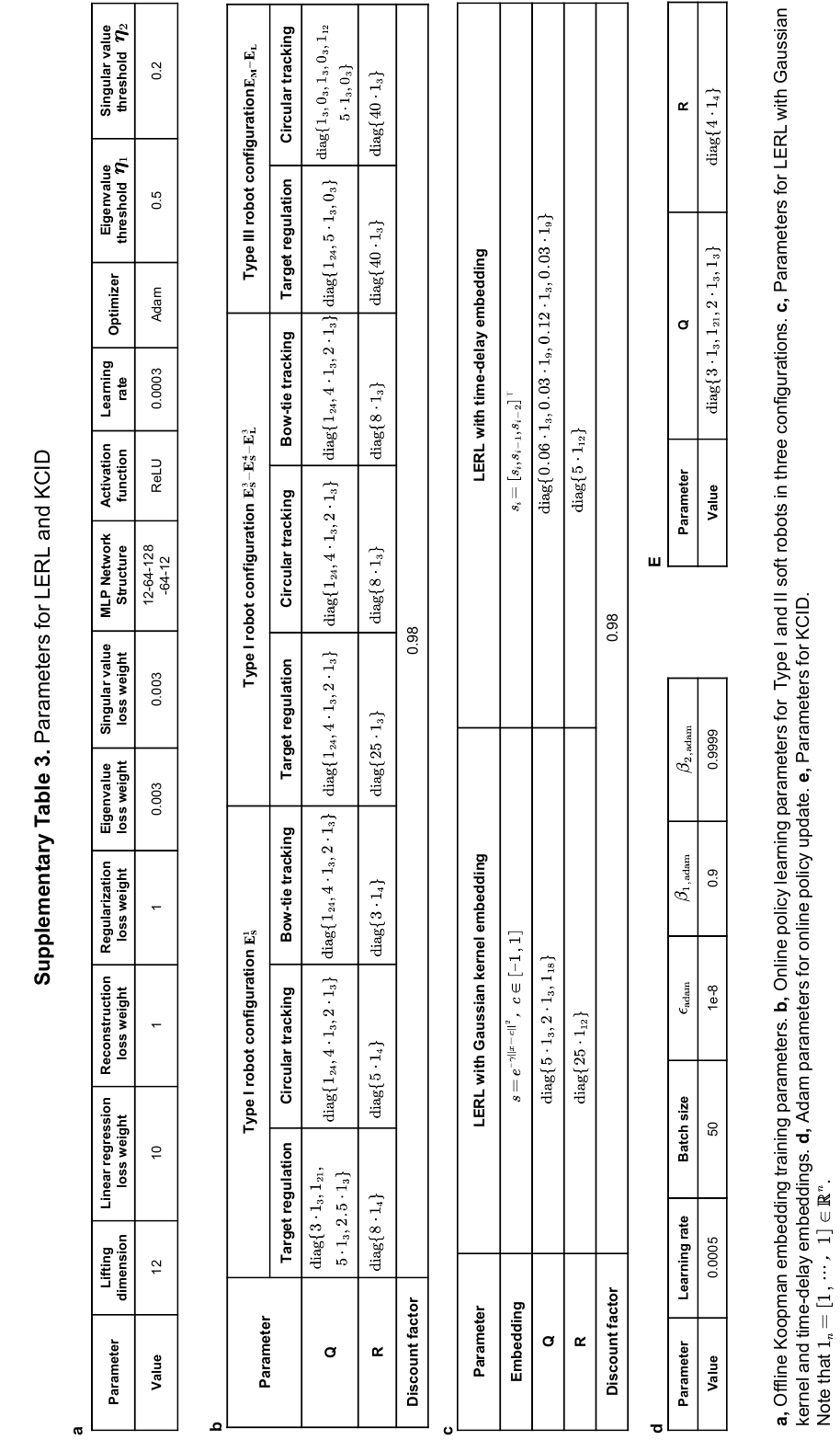}
\label{table parameter}
\end{figure*}

\clearpage

\subsection*{Supplementary Video 1. Hammering nails task}
LERL-enbled elephant-trunk robot strikes nails with a 370 g weighted hammer to bind two wooden boards. At the beginning of the task, the robot was controlled to grip the hammer handed to it by a human. The robot then used the hammer to quickly drive nails into both sides of the wooden board. This video showcases the capabilities of soft robots powered by LERL for daily assistive and industrial applications.
\vspace{-0.2cm}
\subsection*{Supplementary Video 2. Serving drinks with safe interactions}
This video shows that
LERL enables the elephant-trunk robot to complete the drinks-serving task with safe human-robot interactions, emulating a skilled human bartender. The elephant-trunk robot interacts with the human safely through a fist bump and handshake, adaptively adjusts its position and orientation to smoothly pour drinks into the cup, throws the cup into the trash can, and hands the cup to the guest gently and safely.
In addition, the robot remains stable during drinks-serving even under external human interventions. 
This video demonstrates our approach’s capability in precise and smooth control under dynamic
loads of drinks, highlighting the potential in daily delicate task operations that require safe human-robot
interactions.
\vspace{-0.2cm}
\subsection*{Supplementary Video 3.  Calligraphy brush handwriting}
LERL learns from environmental feedback to enable the hybrid soft robot, combined with two honeycomb and one soft-muscle segment, to perform flexible movements, writing characters ``L", ``E", ``R", ``L" with fluid and clear strokes, while ensuring stability in each brushstroke. LERL ensures that the brush tip, mounted at the robot’s end, accurately and stably hovers at a specific height above the paper while precisely following the character
strokes.
\vspace{-0.2cm}
\subsection*{Supplementary Video 4. Quick hand-eye reaction game}
LERL enables the worm robot to play a quick hand-eye reaction game. A camera detects the movements of sticks to determine which stick falls. Then, the worm robot equipped with a net pocket on its tip moves quickly to catch the free-fall stick within 100 ms. This video demonstrates the
potential of our LERL framework for challenging tasks that require high-performing control
in high-speed conditions.
\vspace{-0.2cm}
\subsection*{Supplementary Video 5.  Policy adaptation performance validation}
This video first shows that the online learned control policies drive the honeycomb segment $\text{E}^{1}_\text{S}$ to accomplish target reaching, circle tracking, and bow tie tracking tasks.
Then, its policy is transferred and further updated online to drive three types of soft robots in diverse configurations to complete circle and bow tie tracking tasks. 
This video validates the real-time policy adaptation of LERL to soft robots in multiple configurations.
\vspace{-0.2cm}
\subsection*{Supplementary Video 6.  Empirical robustness validation}
This video presents the empirical robustness validation results of elephant-trunk robots.
LERL enables elephant-trunk robots to achieve precise full-state tracking even under heavy loads of \SI{200}{\, \gram}, \SI{500}{\, \gram}, and \SI{1}{\, \kilogram},  and actuator faults with 22 broken airbags.
LERL’s resilience to hardware degradation addresses a critical barrier
for field applications, where material fatigue or actuator failure is inevitable.
\vspace{-0.2cm}
\subsection*{Supplementary Video 7.  Performance improvement from online learning}
Comparison of control performance before and after learning in elephant-trunk and worm robots. 
Control performance improves substantially throughout the learning process, validating the online learning capability of LERL.
\vspace{-0.2cm}
\subsection*{Supplementary Video 8. Fast-moving traction stick tracking}
The elephant-trunk robot quickly and precisely tracks the fast-moving targets generated from a traction stick manipulated by a human, even under external force interference. This video demonstrates LERL’s effectiveness in tracking rapidly moving targets.

\clearpage

\end{document}